\documentclass{article} 
\usepackage{iclr2026_conference,times}

\usepackage{amsmath,amsfonts,bm}

\def\eqref#1{equation~\ref{#1}}
\def\Eqref#1{Equation~\ref{#1}}

\def\1{\bm{1}}

\DeclareMathAlphabet{\mathsfit}{\encodingdefault}{\sfdefault}{m}{sl}
\SetMathAlphabet{\mathsfit}{bold}{\encodingdefault}{\sfdefault}{bx}{n}

\usepackage{hyperref}
\usepackage{xcolor}
\definecolor{myblue}{RGB}{0,59,163}
\usepackage{url,verbatim}
\usepackage{algorithm}
\usepackage{algorithmic}
\usepackage{graphicx}
\usepackage{subcaption}
\usepackage{booktabs}
\usepackage{array}
\usepackage{wrapfig}
\usepackage{amsthm}
\theoremstyle{plain}

\usepackage{amsmath, amsthm}
\usepackage{thmtools}
\usepackage{thm-restate}

\title{Flow Matching with Injected Noise for \\ Offline-to-Online Reinforcement Learning}

\author{Yongjae Shin \hspace{7ex} Jongseong Chae \hspace{7ex} Jongeui Park \hspace{7ex} Youngchul Sung \thanks{Corresponding author. \hspace{7ex} Our code is available at \url{https://github.com/CTID282/FINO}.} \\KAIST\\ \texttt{\{yongjae.shin, ycsung\}@kaist.ac.kr}}

\iclrfinalcopy 
\begin{document}

\maketitle

\begin{abstract}
Generative models have recently demonstrated remarkable success across diverse domains, motivating their adoption as expressive policies in reinforcement learning (RL).
While they have shown strong performance in offline RL, particularly where the target distribution is well defined, their extension to online fine-tuning has largely been treated as a direct continuation of offline pre-training, leaving key challenges unaddressed.
In this paper, we propose Flow Matching with Injected Noise for Offline-to-Online RL (FINO), a novel method that leverages flow matching-based policies to enhance sample efficiency for offline-to-online RL.
FINO facilitates effective exploration by injecting noise into policy training, thereby encouraging a broader range of actions beyond those observed in the offline dataset.
In addition to exploration-enhanced flow policy training, we combine an entropy-guided sampling mechanism to balance exploration and exploitation, allowing the policy to adapt its behavior throughout online fine-tuning.
Experiments across diverse, challenging tasks demonstrate that FINO consistently achieves superior performance under limited online budgets.
\end{abstract}

\section{Introduction}
\label{sec:intro}
Generative models have recently demonstrated substantial success across diverse domains, producing high-quality outputs in areas such as text and image \citep{gpt-3, stablediffusion}.
By leveraging their expressive capacity, these models can capture complex or multimodal distributions present in the underlying datasets, beyond the reach of conventional parametric models.
This opens up new opportunities in reinforcement learning (RL), particularly for policy design.

Since a policy in RL can be regarded as a generative model conditioned on states, there has been increasing interest in applying generative modeling to policy design, such as denoising diffusion \citep{diffusion2, diffusion1} and flow matching \citep{flow1, flow2}.
While Gaussian policies have been the conventional choice, they often struggle to represent multimodal or high-dimensional action distributions~\citep{empo}.
By contrast, generative policies provide the expressivity required to handle complex RL tasks and have demonstrated superior performance, particularly in offline RL where the target distribution is explicitly defined \citep{dql, idql, edp, qipo, dac, fql}.

Despite such expressivity, offline RL inherently suffers from a fundamental limitation in that the performance of the policy is constrained by the quality of the offline dataset.
Accordingly, offline-to-online RL has been proposed to address this issue, enabling a pre-trained policy to further improve its performance through short-term direct interaction with the environment \citep{off2on, pex, cal-ql, so2, wsrl}.
While some studies \citep{idql, fql} have incorporated additional online fine-tuning of generative policies, they merely treat it as a continuation of offline pre-training rather than designing approaches specialized to the online fine-tuning.

As offline-to-online RL encompasses both offline and online stages, it naturally introduces challenges beyond those faced in a purely offline RL framework.
Unlike offline RL, which relies solely on pre-collected datasets, offline-to-online RL incorporates an online fine-tuning phase, making it beneficial to design the offline pre-training with this subsequent stage in mind from the beginning.
At the same time, the framework raises the practical question of how to best exploit the pre-trained policy during online fine-tuning.
Thus, framing offline-to-online RL merely as an extension of offline RL can limit the extent to which its potential is realized.

In this work, we propose Flow Matching with Injected Noise for Offline-to-Online RL (FINO), a novel policy learning approach for the offline-to-online RL framework.
Motivated by recent findings that maintaining diversity facilitates more effective fine-tuning \citep{orw,gem,tarflow,imagerefl}, we introduce a training strategy that injects noise into the flow matching to explicitly promote diversity in the policy from the beginning of offline pre-training.
This injection encourages the policy to learn a broader range of action space than that present in the offline dataset, thereby establishing a strong foundation for exploration during online fine-tuning.
To effectively leverage this during online fine-tuning, we introduce an entropy-guided sampling mechanism that exploits the acquired diversity for exploration while balancing exploration and exploitation by adapting to the evolving behavior of the policy.
We experiment on 45 diverse and challenging tasks from OGBench \citep{ogbench} and D4RL \citep{d4rl} under a limited online fine-tuning budget.
The results show that FINO achieves consistently strong performance across tasks, even in complex environments, thereby demonstrating FINO as an effective and reliable approach for offline-to-online RL.



\section{Preliminaries}
\label{sec:background}

\textbf{Offline-to-Online Reinforcement Learning.} ~
In this paper, we consider a Markov Decision Process (MDP) \citep{rl} $\mathcal{M}=(\mathcal{S, A}, r, \mathcal{P}, \gamma)$, where $\mathcal{S}$ denotes the state space, $\mathcal{A}$ the action space, $r$ the reward function, $\mathcal{P}$ the transition probability distribution, and $\gamma$ the discount factor.
The objective of RL is to train a policy that maximizes the expected cumulative return $\mathbb{E}_\pi[\sum_i\gamma^ir(s_i,a_i)]$.
Offline-to-online RL is a two-stage learning framework consisting of offline pre-training and online fine-tuning \citep{off2on, pex, cal-ql, so2, wsrl}.
This framework is designed to combine the strengths of offline and online RL: the stability gained from pre-collected datasets without interactions and the adaptability that comes from environment interaction.
In the offline pre-training, a policy is trained on a static dataset $D=\{(s,a,r,s')\}$, providing a reliable initialization.
Subsequently, during the online fine-tuning, the pre-trained policy directly interacts with the environment, allowing it to refine its behavior and correct limitations inherited from the offline dataset.

\textbf{Flow Matching.} ~
Flow matching \citep{flow1, flow2} is a generative modeling framework that constructs a transformation between two probability distributions via ordinary differential equations (ODEs).
Unlike diffusion models \citep{diffusion2, diffusion1}, which rely on stochastic differential equations (SDEs), flow matching is based on deterministic ODEs.
This design not only simplifies training but also enables faster inference.

The central component of flow matching is a time-dependent vector field $v_\theta(t, x)$ that defines a flow $\phi_t$, mapping a base distribution $p_0$ into a target data distribution $p_1$:
\begin{equation}
    \frac{d}{dt}\phi_t(x)=v_\theta(t,\phi_t(x)),\quad \phi_0(x) = x.
    \label{eq:ode}
\end{equation}

A widely used formulation of flow matching is based on Optimal Transport (OT) \citep{flow1}, where transformations are constructed by linearly interpolating between samples from the base and target distributions ($x_0\sim p_0$ and $x_1\sim p_1$), with the interpolation time $t$ sampled uniformly:
\begin{equation}
    x_t=(1-t)x_0+tx_1,\quad t\sim\text{Unif}([0, 1]).
    \label{eq:ot}
\end{equation}

The vector field is trained to align its prediction with the direction of this linear path:
\begin{equation}
    \min_\theta\mathbb{E}_{\substack{x_0\sim p_0,x_1\sim p_1,\\t\sim\text{Unif}([0,1])}}\left[||v_\theta(t,x_t)-\left(x_1-(1-\sigma_{\min})x_0\right)||^2_2\right].
    \label{eq:flow_matching}
\end{equation}

where $\sigma_\text{min}$ is a sufficiently small constant. Once the vector field $v_\theta$ is trained, generation is performed by sampling $x_0\sim p_0$ and solving the learned ODE until $t=1$ to obtain $\phi_1(x_0)\sim p_1$.
In this work, we use the Euler method to solve the ODE for sample generation.

\textbf{Flow Q-Learning.} ~
Flow Q-Learning (FQL) \citep{fql} applies flow matching to policy design for offline RL.
It formulates the policy as a state-conditioned flow model and trains it by adapting flow matching to behavior cloning:
\begin{equation}
    \mathcal{L}_{\pi}(\theta)=\mathbb{E}_{\substack{x_0\sim \mathcal{N}(0,I),\\s,a=x_1\sim D,\\t\sim\text{Unif}([0,1])}}\left[||v_\theta(t,s,x_t)-(x_1-x_0)||^2_2\right].
    \label{eq:fbc}
\end{equation}

Integrating the trained vector field $v_\theta$ induces a mapping $a_\theta(s,z)$ from state $s$ and noise $z$ to action, which defines a policy $\beta_\theta$, linking the flow formulation to a policy representation.

To enable efficient training, FQL further introduces a one-step policy $\pi_\omega$, which is jointly optimized by distillation from the flow policy and action-value maximization:
\begin{equation}
    \mathcal{L}_\pi(\omega)=\mathbb{E}_{\substack{s\sim D,\\z\sim\mathcal{N}(0,I),\\a_\omega(s,z)\sim\pi_\omega}}
    \left[\;-\;Q_\phi(s,a_\omega(s,z))\;+\;\alpha\,\|a_\omega(s,z)-a_\theta(s,z)\|_2^2\right],
    \label{eq:fql}
\end{equation}

where $\alpha$ is a hyperparameter.
In practice, the one-step policy provides a direct mapping from noise to actions without sequential ODE integration, enabling efficient action selection while inheriting the expressiveness of the flow model.

\section{Motivation}
\label{sec:motivation}
\begin{figure}[h]
  \centering
  \includegraphics[width=\linewidth]{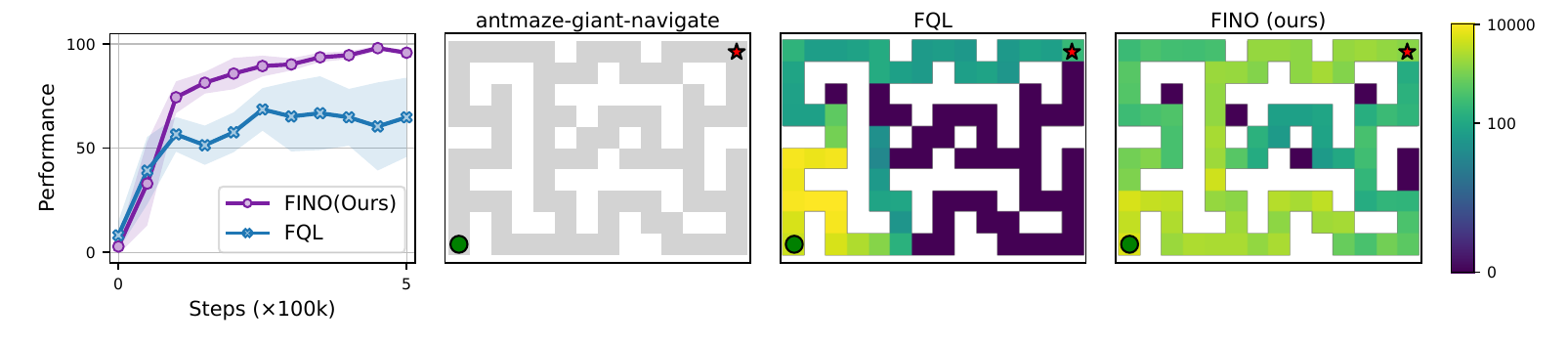}
  \vspace{-4ex}
  \caption{Comparison of FQL and FINO (ours) in terms of performance and exploration patterns on the environment \texttt{antmaze-giant-navigate}. The green circle and red star indicate the initial and goal states, respectively. }
  \label{fig:intro}
\end{figure}

Our motivation lies in better leveraging the expressivity of generative policy, flow policy in particular,  to address the challenges of offline-to-online RL.
There exist prior studies in offline RL \citep{idql, fql}  employing generative policies and extending them to online fine-tuning.
To examine their behavior during online fine-tuning, we conducted an experiment with the challenging task \texttt{antmaze-giant-navigate}  with FQL \citep{fql}.
The second plot of Figure~\ref{fig:intro} illustrates the maze, where the gray region marks the feasible paths, while the third plot shows the visitation frequency of the FQL agent during the first 100k interaction steps.
It is seen that the agent stays mostly near the starting point and reaches the goal only via the upper path, ignoring other possible routes, yielding degraded performance as shown in the first plot in Figure \ref{fig:intro}.
This behavior reflects an offline pre-trained policy that is overly confined to the dataset, which mainly contains the upper success route, resulting in limited exploration during online fine-tuning.
In a strictly offline setting, such confinement to the data distribution is a primary design objective to ensure stability.
But, considering the subsequent online learning, such confinement may not be the best strategy.

One could address this limitation by constructing a larger dataset with diverse data, but this incurs additional cost and time.
Then, how can one induce diverse behavior from a given dataset without increasing the dataset size?
To answer this question, we propose {\em perturbed cloning} during the offline pre-training phase, especially suited to flow-based policy learning.
In our training scheme, we inject noise into flow matching, thereby driving the flow behavior model to extend its support beyond the coverage of the dataset to some reasonable extent.
The policy is then distilled from this perturbed behavior model, which allows it to acquire behaviors spanning a broader action space. 
So learned policies can leverage this broader coverage during subsequent online fine-tuning, facilitating effective exploration to yield better performance, as shown in the rightmost plot of Figure \ref{fig:intro}. Details of the proposed method follow in the next section.

\section{Method}
\label{sec:method}

In this section, we present Flow Matching with Injected Noise for Offline-to-Online RL (FINO), a novel method that utilizes flow policies within the offline-to-online RL framework.
Our approach consists of two main components:
(1) from the beginning of offline pre-training, we inject controlled noise into flow matching, encouraging the policy to explore a broader range of actions beyond those in the dataset;
(2) in the online fine-tuning, we leverage this expanded action space for exploration, while introducing an entropy-guided sampling mechanism that dynamically balances exploration and exploitation according to the behavior of the policy.

\subsection{Noise Injection for Flow Matching}
\label{subsec:noise_injection}
Since our method builds directly on the flow matching formulation, we begin by presenting its conditional probability path.
The rationale behind training flow matching as in \Eqref{eq:flow_matching} lies in the design of its conditional probability path \citep{flow1}:
\begin{equation}
    p^{\text{FM}}_t(x|x_1) = \mathcal{N}(x|tx_1, (1-(1-\sigma_{\min})t)^2I)
    \label{eq:cpp_fm}
\end{equation}

where $\sigma_\text{min}$ is a sufficiently small constant ensuring that $p^{\text{FM}}_1(x|x_1)$ concentrates around the given data point.

In FQL \citep{fql}, the variance is set to $\sigma_{\min}=0$ in \Eqref{eq:cpp_fm}, which reduces the training objective to \Eqref{eq:fbc}.
With $\sigma_{\min}=0$, the distribution collapses onto individual data points, leaving little coverage beyond the dataset itself.
This narrow formulation shows clear limitations as shown in the previous section, as it restricts effective exploration during online fine-tuning.
To overcome this limitation, we propose a noise-injected training scheme that retains the core objective of flow matching while enabling the model to learn a wider action space than point-wise matching:
\begin{equation}
    \mathcal{L}_{\text{FINO}}(\theta)=\mathbb{E}_{\substack{s,a=x_1\sim D,\\x_0\sim \mathcal{N}(0,I),\\t\sim\text{Unif}([0,1])}}\left[||v_\theta(t,s,x_t+\epsilon_t)-(x_1-(1-\eta)x_0)||^2_2\right], \quad \epsilon_t\sim \mathcal{N}(0,\alpha_t^2I)
    \label{eq:fino}
\end{equation}

where $\alpha_t^2=\left(\eta^2-2\eta\right)t^2+\left(2\eta\right)t$ is the scheduled variance for some $\eta\in[0,1]$, and $t$ is the interpolation time.
\Eqref{eq:fino} reduces to the standard flow matching (\Eqref{eq:fbc}) when $\eta=0$.  The variance of the injected noise is non-negative for all $\eta \ge0$ and $t\in[0,1]$.
Note that at $t=0$, $\alpha_0^2=0$, and at $t=1$, $\alpha_1^2=\eta^2>0$.



\begin{restatable}{proposition}{ProbabilityPath}
    \label{theorem-1}
    For notational simplicity, we denote $(s_i,x_i^1)$ as $x_i$. Given a dataset $\mathcal{D}=\{x_i\}_{i=1}^{N}$, the proposed time-dependent noise injection $\epsilon_t\sim\mathcal{N}(0,\alpha_t^2 I)$ induces the following conditional probability paths of flow $\phi_t$:
    \begin{align*}
        p^{\text{FINO}}_t(x|x_i)=\mathcal{N}\left(x\;\Big|\;\mu_t(x_i)=tx_i,\, \Sigma_t(x_i)=\left(1-(1-\eta)t\right)^2I\right),
    \end{align*}
    in which the mean $tx_i$ is equal to the mean induced from flow matching, and the variance $(1-(1-\eta)t)^2$ is greater than or equal to the variance induced from flow matching.
\end{restatable}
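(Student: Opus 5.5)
The plan is to compute the distribution of the perturbed interpolant directly and then read off the regression target. Condition on a data point $x_i$ (state plus action $x_i^1$). The network in \Eqref{eq:fino} is evaluated at
\[
\tilde x_t = x_t+\epsilon_t=(1-t)x_0+tx_i+\epsilon_t,
\]
with $x_0\sim\mathcal{N}(0,I)$ and $\epsilon_t\sim\mathcal{N}(0,\alpha_t^2I)$ independent.

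A sum of independent Gaussians is Gaussian, so $\tilde x_t\mid x_i$ has mean $tx_i$ and covariance $\bigl((1-t)^2+\alpha_t^2\bigr)I$. Substituting $\alpha_t^2=(\eta^2-2\eta)t^2+2\eta t$ and expanding,
\[
(1-t)^2+\alpha_t^2 = 1-2(1-\eta)t+(1-\eta)^2t^2=\bigl(1-(1-\eta)t\bigr)^2 ,
\]
which matches the claimed $\Sigma_t(x_i)$. The schedule was evidently chosen to make this square complete, so this is the only computation needed for the marginal.

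Next I must show this Gaussian is actually the probability path of the flow defined by the regression target $x_i-(1-\eta)x_0$. Following \citet{flow1}, the Gaussian path $\mathcal{N}(tx_i,\sigma_t^2I)$ with $\sigma_t=1-(1-\eta)t$ is generated by the conditional flow $\psi_t(x_0)=\sigma_t x_0+tx_i$. Its velocity is
\[
\frac{d}{dt}\psi_t(x_0)=x_i-(1-\eta)x_0,
\]
which is exactly the target in \Eqref{eq:fino}. This is the same as \Eqref{eq:cpp_fm} with $\sigma_{\min}$ replaced by $\eta$. Moreover $\tilde x_t$ and $\psi_t(x_0)$ have the same law. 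So the minimizer of $\mathcal{L}_{\text{FINO}}$ regresses $\mathbb{E}[x_i-(1-\eta)x_0\mid \tilde x_t]$, which is the conditional vector field of this path.

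\textbf{Main obstacle.} This is the step I expect to be subtle. In $\mathcal{L}_{\text{FINO}}$ the target depends on $x_0$, while the input involves $x_0$ and $\epsilon_t$ separately, so the conditional expectation is not literally $u_t(\tilde x_t\mid x_i)$ of $\psi_t$. I would handle it with the standard Gaussian identity $\mathbb{E}[x_0\mid \tilde x_t]=(1-t)(\tilde x_t-tx_i)/\sigma_t^2$. I would then argue, as the paper's informal "induces" suggests, that the proposition is about the marginal path $p_t^{\text{FINO}}(\cdot\mid x_i)$, and establish that claim rigorously by the computation above.

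Finally, for the comparison with flow matching, the means agree trivially. For the variances, for $t\in[0,1]$ and $0\le\sigma_{\min}\le\eta\le1$ both $1-(1-\eta)t$ and $1-(1-\sigma_{\min})t$ are nonnegative, and the former is at least the latter. Squaring therefore gives $(1-(1-\eta)t)^2\ge(1-(1-\sigma_{\min})t)^2$. The FQL case $\sigma_{\min}=0$ is immediate. Equivalently, the difference in variances relative to $(1-t)^2$ is just $\alpha_t^2\ge0$.
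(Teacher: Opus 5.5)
Your core argument is correct and is essentially the paper's proof: write the perturbed interpolant as $tx_i+(1-t)x_0+\epsilon_t$, use independence of the two Gaussians to get covariance $\bigl((1-t)^2+\alpha_t^2\bigr)I$, complete the square to $(1-(1-\eta)t)^2$, and obtain the comparison with flow matching from $\alpha_t^2\ge 0$. Your handling of a general $\sigma_{\min}\le\eta$ is slightly more careful than the paper, which simply takes $\sigma_{\min}\approx 0$. The vector-field detour is not needed for this proposition, and you should drop it rather than try to ``handle'' it: your own identity $\mathbb{E}[x_0\mid\tilde x_t]=(1-t)(\tilde x_t-tx_i)/\sigma_t^2$ shows the FINO regression target has slope $-(1-\eta)(1-t)/\sigma_t^2$ in $\tilde x_t$, not the slope $-(1-\eta)/\sigma_t$ of the path's conditional field, so your sentence claiming the minimizer regresses that field is false whenever $0<\eta<1$ and $t>0$.
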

\begin{restatable}{theorem}{FINOequation}
    \label{theorem-2}
    Given a data $x_i$ from a dataset distribution and a noise $x_0$ from the base distribution, the conditional probability paths in Proposition~\ref{theorem-1} induce the unique conditional vector field that has the following form:
    \begin{align*}
        v_t(x|x_i)=x_i-(1-\eta)x_0.
    \end{align*}
    Then, for any dataset distribution, the marginal vector field $v_t(x)$ generates the marginal probability path $p_t(x)$, in other words, both $v_t(x)$ and $p_t(x)$ satisfy the continuity equation.
\end{restatable}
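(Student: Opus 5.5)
The plan is to follow the standard construction of \citet{flow1} (Lipman et al., Theorems 1--3): first identify the Gaussian conditional path $p^{\text{FINO}}_t(x|x_i)$ of Proposition~\ref{theorem-1} as the pushforward of the base distribution under an affine conditional flow; then read off the conditional vector field; finally invoke the marginalization argument to get the continuity equation for the marginals.

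\textbf{Step 1 (conditional flow and uniqueness).} Write $\mu_t=tx_i$ and $\sigma_t=1-(1-\eta)t$. Then $\sigma_0=1$, $\mu_0=0$, and $\sigma_t>0$ on $[0,1]$ when $\eta\in(0,1]$; for $\eta=0$ we are in the standard case with $\sigma_{\min}=0$. Among flows of the form $\psi_t(x_0)=\sigma_t x_0+\mu_t$, which push $\mathcal N(0,I)$ to $p^{\text{FINO}}_t(\cdot|x_i)$, uniqueness of the vector field holds because Lipman et al.\ (Theorem 3) show that the affine Gaussian path determines $u_t(x|x_i)=\frac{\sigma_t'}{\sigma_t}(x-\mu_t)+\mu_t'$ uniquely. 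Differentiating the flow gives
\[
\frac{d}{dt}\psi_t(x_0)=\sigma_t' x_0+\mu_t' = x_i-(1-\eta)x_0 .
\]
Substituting $x=\psi_t(x_0)$, i.e.\ $x_0=(x-tx_i)/\sigma_t$, into $u_t$ gives $v_t(x|x_i)=x_i-(1-\eta)x_0$, which matches the claimed form. I would also note how this connects to Eq.~\eqref{eq:fino}. The point $x_t+\epsilon_t$ has mean $tx_i$ and variance $(1-t)^2+\alpha_t^2=(1-(1-\eta)t)^2$, so the training input is distributed as $p^{\text{FINO}}_t$; this is Proposition~\ref{theorem-1}, which I may assume.

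\textbf{Step 2 (conditional continuity equation).} Because $\psi_t$ is a diffeomorphism generated by $u_t(\cdot|x_i)$, its pushforward density satisfies $\partial_t p_t(x|x_i)+\nabla\cdot\big(p_t(x|x_i)v_t(x|x_i)\big)=0$. As a sanity check, this can also be verified directly from the Gaussian density.

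\textbf{Step 3 (marginalization).} Define $p_t(x)=\int p_t(x|x_i)q(x_i)\,dx_i$ and
\[
v_t(x)=\int v_t(x|x_i)\frac{p_t(x|x_i)q(x_i)}{p_t(x)}\,dx_i,
\]
where $q$ is any dataset distribution, for example the empirical one. Then
\[
\partial_t p_t=\int \partial_t p_t(x|x_i)\,q\,dx_i=-\int\nabla\cdot\big(p_t(x|x_i)v_t(x|x_i)\big)q\,dx_i=-\nabla\cdot(p_tv_t),
\]
which is exactly Lipman et al., Theorem 1.

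The main obstacle is justifying the exchange of derivative, divergence and integral, and ensuring $p_t(x)>0$ so that $v_t$ is well defined. I would handle both using the Gaussian form: its densities are smooth and strictly positive whenever $\sigma_t>0$, and they have integrable derivatives with uniform Gaussian tails on compact $t$-intervals. This allows dominated convergence for an arbitrary $q$ with finite second moment, and trivially for a finite dataset. The endpoint $t=1$ with $\eta=0$ is degenerate, so I would treat it as a limit or restrict to $t<1$ there.
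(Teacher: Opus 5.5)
Your proposal is correct and follows essentially the same route as the paper: write the affine conditional flow $\phi_t(x_0)=tx_i+(1-(1-\eta)t)x_0$, obtain $v_t(x|x_i)=x_i-(1-\eta)x_0$ by differentiating it (cross-checked against Theorem~3 of \citet{flow1}), and then invoke the marginalization argument (Theorem~1 of \citet{flow1}) for the continuity equation. The paper does not address positivity of $p_t$, exchanging derivative and integral, or the degenerate case $\eta=0$, $t=1$; your treatment of these adds rigor but does not change the argument.
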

Theorem~\ref{theorem-2} shows that FINO~(\Eqref{eq:fino}) yields a valid continuous normalizing flow, which means that the flow model trained by~\Eqref{eq:fino} can generate samples close to those obtained by the behavior policy of the training dataset.
\begin{restatable}{theorem}{FINOvariance}
    \label{theorem-3}
    Suppose the cardinality of the dataset is finite, $\mathcal{D}=\{x_1,x_2,\ldots,x_N\}$ for some $N$, and data are independently and identically distributed (i.i.d.) sampled. The variance of the marginal probability path induced by FINO~(\Eqref{eq:fino}) is greater than or equal to that of the marginal probability path induced by the flow matching (FM) objective~(\Eqref{eq:flow_matching}). For any time $t\in[0,1]$,
    \begin{align*}
        \mathrm{Var}\left(X_t^{\text{FINO}}\right)\geq\mathrm{Var}\left(X_t^{\text{FM}}\right), \quad X_t^{\text{FINO}}\sim p_t^{\text{FINO}}(x),\;X_t^{\text{FM}}\sim p_t^{\text{FM}}(x).
    \end{align*}
\end{restatable}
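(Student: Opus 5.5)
The marginal path under either objective is a finite Gaussian mixture. For a finite dataset $\mathcal{D}=\{x_1,\dots,x_N\}$ sampled i.i.d. with uniform weights $1/N$, the marginal is
\[
p_t(x)=\frac{1}{N}\sum_{i=1}^N p_t(x|x_i).
\]
By Proposition~\ref{theorem-1}, the FINO components are $\mathcal{N}\!\left(tx_i,(1-(1-\eta)t)^2 I\right)$. By \Eqref{eq:cpp_fm}, the FM components are $\mathcal{N}\!\left(tx_i,(1-(1-\sigma_{\min})t)^2 I\right)$.

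The plan is to compute both variances with the law of total variance. Let $I$ be a uniform index on $\{1,\dots,N\}$. Then
\[
\mathrm{Var}(X_t)=\mathbb{E}\!\left[\mathrm{Var}(X_t\mid I)\right]+\mathrm{Var}\!\left(\mathbb{E}[X_t\mid I]\right).
\]
Here ``variance'' is read as the trace of the covariance, or equivalently coordinatewise, since all conditional covariances are isotropic. The between-component term is $\mathrm{Var}(tx_I)=t^2\,\widehat{\mathrm{Var}}(\mathcal{D})$, the empirical variance of the data scaled by $t^2$. This term is the same for FINO and FM because the component means coincide. The within-component term is $d(1-(1-\eta)t)^2$ for FINO and $d(1-(1-\sigma_{\min})t)^2$ for FM, where $d$ is the dimension. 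Hence
\[
\mathrm{Var}(X_t^{\text{FINO}})-\mathrm{Var}(X_t^{\text{FM}})=d\left[(1-(1-\eta)t)^2-(1-(1-\sigma_{\min})t)^2\right].
\]

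It then remains to show this bracket is non-negative for every $t\in[0,1]$. Both bases $1-(1-\eta)t$ and $1-(1-\sigma_{\min})t$ are non-negative on $[0,1]$ when $\eta,\sigma_{\min}\in[0,1]$. Therefore the inequality reduces to $(1-\eta)t\le(1-\sigma_{\min})t$, i.e.\ $\eta\ge\sigma_{\min}$. This is the same comparison already asserted at the end of Proposition~\ref{theorem-1}, so I will invoke it directly.

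The main obstacle is interpretive rather than computational: the comparison must be made against the right FM baseline. For the paper's setting (FQL, $\sigma_{\min}=0$) the condition is automatic, since $\eta\ge 0$. In general the argument needs $\eta\ge\sigma_{\min}$, and I will state this explicitly, which matches the convention in Proposition~\ref{theorem-1}. A secondary point is to justify that mixture weights and means agree across the two methods, so that only the conditional covariances differ. This holds because both marginals are built from the same empirical distribution and the same means $tx_i$; i.i.d. sampling is used only to write the marginal as this uniform mixture.
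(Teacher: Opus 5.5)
Your proposal is correct and follows the same route as the paper. Both arguments write the marginal as the uniform mixture $\frac{1}{N}\sum_i p_t(x|x_i)$, split its variance into a within-component part $d\,\sigma_t^2$ and a between-component part that depends only on the means $tx_i$, and note that the latter cancels because the means coincide, leaving $d$ times the difference of conditional variances. The differences are cosmetic: you invoke the law of total variance where the paper expands $\mathbb{E}[X_t^2]-\mathbb{E}[X_t]^2$ by hand, and you state explicitly the condition $\eta\ge\sigma_{\min}$ that the paper takes implicitly from the comparison asserted in Proposition~\ref{theorem-1}.
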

Theorem~\ref{theorem-3} states that at time $t=1$, the marginal probability path induced by FINO ($p_1^{\text{FINO}}(x)$) exhibits larger variance than flow matching ($p_1^{\text{FM}}(x)$).
This means that the model trained by~\Eqref{eq:fino} represents wider action regions than the flow matching model~(\Eqref{eq:flow_matching}), making it more suitable for exploration.
The proofs of Proposition \ref{theorem-1} and Theorems \ref{theorem-2}, \ref{theorem-3} are provided in Appendix \ref{appendix:proofs}.

To illustrate the effect of our design, we conduct a simple toy experiment.
We consider a setting with a fixed state and a two-dimensional action space, where the dataset is generated by sampling points
\begin{wrapfigure}{r}{0.5\textwidth}
    \begin{center}
        \vspace{-4ex}
        \includegraphics[width=0.5\textwidth]{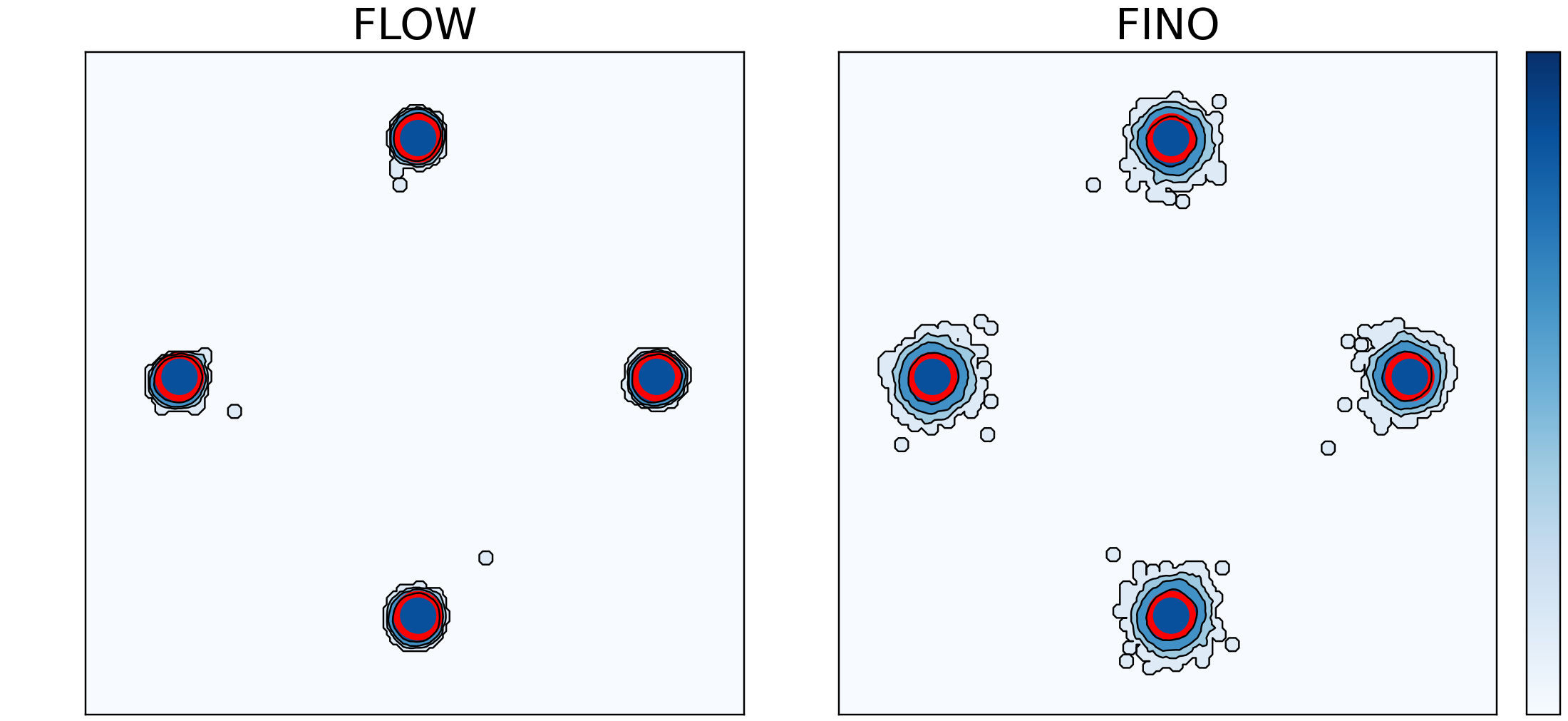}
        \caption{Toy example: blue contours represent the log-density of model samples; red circles denote the dataset.}
        \label{fig:toy_main}
    \end{center}
    \vspace{-5ex}
\end{wrapfigure}
inside four circular regions.
Both flow matching and our proposed method are trained on the same dataset.
As shown in Figure \ref{fig:toy_main}, flow matching predominantly focuses on the data points themselves, leading the trained actions to remain almost entirely within the dataset distribution.
In contrast, our method with noise injection learns to cover a wider region of the action space.
Notably, this expansion occurs in a reliable manner, remaining centered around the dataset and thereby providing a broader yet plausible coverage of the action space.

The expanded flow model then guides the training of the one-step policy that interacts with the environment.
As the one-step policy is trained under \Eqref{eq:fql}, the expanded flow model enables action-value maximization over a broader region of the action space, which is then utilized for exploration during online fine-tuning.
We provide a detailed explanation in Appendix \ref{appendix:noise}.\\\\

\vspace{-2ex}
\begin{algorithm}[t!]
\caption{FINO: Flow Matching with Injected Noise for Offline-to-Online RL}
\label{alg}
\begin{algorithmic}[1]
    \STATE \textbf{Inputs:} flow matching policy $\beta_\theta$, one-step policy $\pi_\omega$, value function $Q_\phi$, candidate action samples $N_\text{sample}$, entropy update steps $N_\xi$
    \WHILE{in offline pre-training}
        \STATE Update $\omega, \theta$ based on \Eqref{eq:fql}, \ref{eq:fino} and update $\phi$ via TD loss
    \ENDWHILE
    \WHILE{in online fine-tuning}
        \STATE Sample $N_\text{sample}$ candidate actions $\{a_1, a_2, \cdots, a_{N_{\text{sample}}}\} \sim \pi_\omega(s)$
        \STATE Compute sampling probability $p(i)$ using \Eqref{eq:sampling}
        \STATE Select $a$ from categorical distribution $p$
        \STATE Update $\omega, \theta$ based on \Eqref{eq:fql}, \ref{eq:fino} and update $\phi$ via TD loss
        \IF{step mod $N_\xi$ == 0}
        \STATE Estimate the entropy of policy $\mathcal{H}$
        \STATE Update $\xi$ using \Eqref{eq:update_xi}
        \ENDIF
    \ENDWHILE
\end{algorithmic}
\end{algorithm}
\vspace{-4ex}

\subsection{Entropy-Guided Sampling}
\label{subsec:sampling}
After offline pre-training, the next step is to leverage the policy effectively during online fine-tuning, where the agent continues improving through direct interaction with the environment.
With noise injection, the policy is trained to yield more diverse actions, each reflecting slightly different behaviors.
To exploit this action diversity, the agent first samples multiple candidate actions for a given state using multiple base noises for the flow model.
Now, we do not simply choose the action that maximizes action-value, which corresponds to exploitation.
Instead, we construct a sampling distribution over the candidate actions based on their action-values as 
\begin{equation}
    p_\text{sampling}(i) = \frac{\exp(\xi \cdot Q_\phi(s,a_i))}{\sum_j\exp(\xi \cdot Q_\phi(s,a_j))}, \quad \forall i \in [1, \cdots, N_\text{sample}]
    \label{eq:sampling}
\end{equation}
where $\xi$ is a temperature parameter.
An actual action is drawn from $p_\text{sampling}$, so that even lower-value actions can be sampled for exploration purposes.
A smaller $\xi$ produces a flatter distribution that promotes more uniform sampling and encourages exploration, whereas a larger $\xi$ sharpens the distribution and prioritizes greedy actions for exploitation.

Since sample-efficient learning under a limited interaction budget is the primary objective of online fine-tuning, maintaining an appropriate balance between exploration and exploitation remains a critical challenge.
However, relying on a fixed value of $\xi$ cannot adequately address the dynamics of the learning process.
So, we adapt the sampling strategy to the behavior of the policy, using entropy of the policy ($\mathcal{H}$) as an indicator and adjusting $\xi$ accordingly:
\begin{equation}
    \xi_\text{new} = \xi - \alpha_\xi[\mathcal{H} - \bar{\mathcal{H}}],
    \label{eq:update_xi}
\end{equation}
where $\bar{\mathcal{H}}$ is the target entropy, and $\alpha_\xi$ denotes the learning rate.
By adapting its behavior according to the policy entropy, it properly controls the balance between exploration and exploitation throughout online fine-tuning.
At inference time, the agent deterministically selects the action with the highest action-value, ensuring stable performance.
The overall training pipeline is summarized in Algorithm~\ref{alg}.

\subsection{Practical Implementation}
We use FQL \citep{fql} as the backbone model, and accordingly the one-step policy, trained with \Eqref{eq:fql} is employed for environment interaction.
Since this policy is obtained through distillation and action-value maximization, its distribution is intractable, making direct entropy computation infeasible.
To address this, we follow prior work \citep{dacer} and estimate entropy by sampling multiple actions from the same state and fitting them with a Gaussian Mixture Model (GMM).
A detailed description of the computation procedure is provided in Appendix \ref{appendix:entropy}.

Regarding hyperparameters, FINO involves two key parameters.
For $\eta$, which determines the variance of the injected noise, we set its value based on the action range.
Since all experimental environments use actions bounded within $[-1, 1]$, we fix $\eta=0.1$.
For $N_\text{sample}$, as the volume of the action space to explore grows significantly with the dimension, more samples are required to obtain a sufficiently diverse set of candidates for effective exploration.
Therefore, we set the number of sampled actions to half of the action dimension.
In implementation, we adopt a smooth shifted exponential schedule for $\alpha_t$ that satisfies the same boundary conditions, i.e., $\alpha_0^2\approx 0$ and $\alpha_1^2=\eta^2>0$, and we simply use the target vector $x_1-x_0$, as we empirically observed no difference in performance.
We note that these core hyperparameters remain fixed throughout the training process.
Further implementation details and additional hyperparameter settings are provided in Appendix \ref{appendix:implementation} and \ref{appendix:hyperparameter}.

\section{Experiments}
\label{sec:experiment}
In this section, we empirically demonstrate the effectiveness of FINO.
To this end, we evaluate the proposed method across a range of challenging environments, comparing its performance against several baselines.

\newcommand{\res}[2]{#1 & #2}
\begin{table}[t!]
    \centering
    \footnotesize
    \vskip 0.1in
    \renewcommand{\arraystretch}{1.1}
    \caption{Performance of FINO and baselines across OGBench and D4RL tasks. Results show scores after offline pre-training and after online fine-tuning, averaged over 10 seeds with mean and 95\% confidence intervals. D4RL antmaze and adroit aggregate six and four tasks, respectively, while OGBench reports results over five tasks (task names abbreviated by omitting the singletask suffix). Full results are presented in Table \ref{tab:full_result_main}.}
    \resizebox{\textwidth}{!}{%
    \begin{tabular}{l|*{6}{r@{\,$\rightarrow$\,}l}}
    \toprule
    Task
      & \multicolumn{2}{c}{ReBRAC}
      & \multicolumn{2}{c}{Cal-QL}
      & \multicolumn{2}{c}{RLPD}
      & \multicolumn{2}{c}{IFQL}
      & \multicolumn{2}{c}{FQL}
      & \multicolumn{2}{c}{FINO} \\
    \midrule
    OGBench humanoidmaze-medium-navigate
      & \res{21{\tiny{$\pm$5}}}{3{\tiny{$\pm$3}}}
      & \res{0{\tiny{$\pm$0}}}{0{\tiny{$\pm$0}}}
      & \res{0{\tiny{$\pm$0}}}{1{\tiny{$\pm$1}}}
      & \res{59{\tiny{$\pm$7}}}{70{\tiny{$\pm$5}}}
      & \res{53{\tiny{$\pm$6}}}{61{\tiny{$\pm$2}}}
      & \res{50{\tiny{$\pm$7}}}{\textbf{97}{\tiny{$\pm$1}}} \\
    OGBench humanoidmaze-large-navigate
      & \res{2{\tiny{$\pm$1}}}{1{\tiny{$\pm$0}}}
      & \res{0{\tiny{$\pm$0}}}{0{\tiny{$\pm$0}}}
      & \res{0{\tiny{$\pm$0}}}{0{\tiny{$\pm$0}}}
      & \res{11{\tiny{$\pm$3}}}{10{\tiny{$\pm$2}}}
      & \res{5{\tiny{$\pm$1}}}{10{\tiny{$\pm$3}}}
      & \res{6{\tiny{$\pm$2}}}{\textbf{33}{\tiny{$\pm$7}}} \\
    OGBench antmaze-large-navigate
      & \res{85{\tiny{$\pm$3}}}{\textbf{99}{\tiny{$\pm$0}}}
      & \res{12{\tiny{$\pm$9}}}{12{\tiny{$\pm$8}}}
      & \res{0{\tiny{$\pm$0}}}{80{\tiny{$\pm$7}}}
      & \res{32{\tiny{$\pm$4}}}{72{\tiny{$\pm$6}}}
      & \res{81{\tiny{$\pm$2}}}{92{\tiny{$\pm$1}}}
      & \res{81{\tiny{$\pm$2}}}{\textbf{99}{\tiny{$\pm$0}}} \\
    OGBench antmaze-giant-navigate
      & \res{35{\tiny{$\pm$7}}}{\textbf{96}{\tiny{$\pm$4}}}
      & \res{2{\tiny{$\pm$3}}}{0{\tiny{$\pm$0}}}
      & \res{0{\tiny{$\pm$0}}}{47{\tiny{$\pm$9}}}
      & \res{1{\tiny{$\pm$1}}}{0{\tiny{$\pm$0}}}
      & \res{16{\tiny{$\pm$6}}}{71{\tiny{$\pm$5}}}
      & \res{14{\tiny{$\pm$6}}}{79{\tiny{$\pm$0}}} \\
    OGBench antsoccer-arena-navigate
      & \res{0{\tiny{$\pm$0}}}{0{\tiny{$\pm$0}}}
      & \res{0{\tiny{$\pm$0}}}{0{\tiny{$\pm$0}}}
      & \res{0{\tiny{$\pm$0}}}{2{\tiny{$\pm$1}}}
      & \res{33{\tiny{$\pm$4}}}{35{\tiny{$\pm$5}}}
      & \res{61{\tiny{$\pm$2}}}{\textbf{74}{\tiny{$\pm$4}}}
      & \res{57{\tiny{$\pm$3}}}{\textbf{77}{\tiny{$\pm$5}}} \\
    OGBench cube-double-play
      & \res{9{\tiny{$\pm$2}}}{28{\tiny{$\pm$2}}}
      & \res{2{\tiny{$\pm$3}}}{0{\tiny{$\pm$0}}}
      & \res{0{\tiny{$\pm$0}}}{2{\tiny{$\pm$3}}}
      & \res{14{\tiny{$\pm$1}}}{40{\tiny{$\pm$2}}}
      & \res{31{\tiny{$\pm$3}}}{73{\tiny{$\pm$2}}}
      & \res{34{\tiny{$\pm$3}}}{\textbf{79}{\tiny{$\pm$2}}} \\
    OGBench puzzle-4x4-play
      & \res{14{\tiny{$\pm$1}}}{29{\tiny{$\pm$5}}}
      & \res{2{\tiny{$\pm$3}}}{20{\tiny{$\pm$8}}}
      & \res{0{\tiny{$\pm$0}}}{\textbf{58}{\tiny{$\pm$11}}}
      & \res{26{\tiny{$\pm$2}}}{42{\tiny{$\pm$4}}}
      & \res{15{\tiny{$\pm$2}}}{45{\tiny{$\pm$8}}}
      & \res{19{\tiny{$\pm$3}}}{\textbf{56}{\tiny{$\pm$5}}} \\
    D4RL antmaze
      & \res{80{\tiny{$\pm$5}}}{89{\tiny{$\pm$5}}}
      & \res{50{\tiny{$\pm$3}}}{89{\tiny{$\pm$2}}}
      & \res{0{\tiny{$\pm$0}}}{91{\tiny{$\pm$2}}}
      & \res{66{\tiny{$\pm$5}}}{79{\tiny{$\pm$5}}}
      & \res{80{\tiny{$\pm$4}}}{\textbf{95}{\tiny{$\pm$1}}}
      & \res{79{\tiny{$\pm$4}}}{\textbf{96}{\tiny{$\pm$1}}} \\
    D4RL adroit
      & \res{21{\tiny{$\pm$2}}}{83{\tiny{$\pm$2}}}
      & \res{-0{\tiny{$\pm$0}}}{-0{\tiny{$\pm$0}}}
      & \res{0{\tiny{$\pm$0}}}{73{\tiny{$\pm$5}}}
      & \res{18{\tiny{$\pm$1}}}{42{\tiny{$\pm$3}}}
      & \res{14{\tiny{$\pm$3}}}{100{\tiny{$\pm$6}}}
      & \res{13{\tiny{$\pm$2}}}{\textbf{112}{\tiny{$\pm$1}}} \\
    \bottomrule
    \end{tabular}%
    }
    \label{tab:main}
\end{table}

\textbf{Environments.} ~
We primarily evaluate the performance of FINO on OGBench \citep{ogbench}, a recently proposed benchmark that extends beyond the commonly used D4RL \citep{d4rl} by incorporating tasks with greater diversity and complexity.
Although OGBench is originally introduced for benchmarking offline goal-conditioned RL, we adapt it to our setting by employing its single-task variant, where each goal is treated as an independent task.
We also include results on the widely adopted D4RL benchmark, which remains a common benchmark in offline-to-online RL.

\textbf{Baselines.} ~
We consider the following baselines for comparison:
(1) ReBRAC \citep{rebrac} is a Gaussian policy-based method, has demonstrated strong performance across offline RL and the offline-to-online RL setting.
(2) Cal-QL \citep{cal-ql} is a representative offline-to-online RL algorithm that extends CQL \citep{cql} to the offline-to-online setting.
(3) RLPD \citep{rlpd} is an online RL algorithm initialized with an offline dataset, which achieves superior performance despite relying solely on online training.
Since prior work has provided limited investigation of flow matching and denoising diffusion in the offline-to-online RL setting, we additionally construct flow matching variants of existing algorithms to provide a meaningful point of comparison.
(4) IFQL, introduced in the FQL paper \citep{fql}, is an adaptation of IDQL \citep{idql} to the flow matching setting.
Similar to our approach, it samples multiple actions from a single state and selects one for execution.
(5) FQL \citep{fql}, described in Section \ref{sec:background}, serves as the backbone algorithm upon which our method is built.

\textbf{Evaluation.} ~
For all baselines, we report results using the same experimental protocol, consisting of 1M offline pre-training steps followed by 500K online fine-tuning steps.
To assess the performance gain during online fine-tuning, we present both the results immediately after offline pre-training and those obtained at the end of online fine-tuning.
All experiments are averaged over 10 random seeds, and we report the mean and 95\% confidence intervals.
The best-performing results are highlighted in bold when they fall within 95\% of the best performance.

\begin{figure}[t!]
  \centering
  \vspace{-1ex}
  \includegraphics[width=0.7\linewidth]{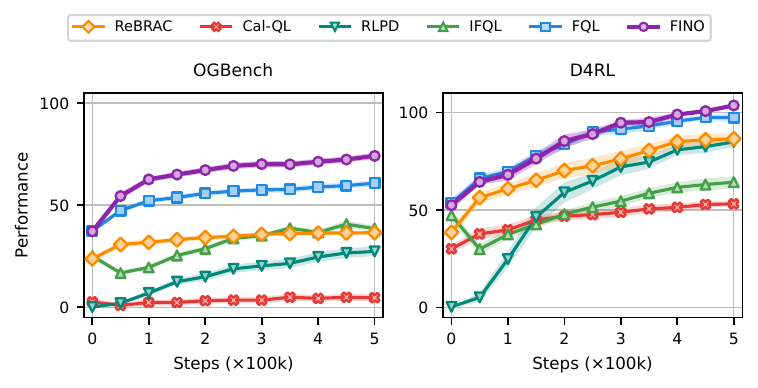}
  \vspace{-2ex}
  \caption{Aggregate performance across two benchmark domains. Each figure reports the averaged learning curves over the common environments within the respective domain. Full results are presented in Figures \ref{fig:ogbench_full} and \ref{fig:d4rl_full}.}
  \label{fig:main_result}
  \vspace{-1ex}
\end{figure}

\textbf{Results.} ~
Table \ref{tab:main} summarizes the results across a total of 45 tasks, aggregated by task category.
Overall, FINO consistently achieves strong performance across a diverse range of environments.
Crucially, this is achieved without degrading offline performance, as our method learns the model that preserves the mean of the probability path while increasing variance from the offline dataset, supported by Theorem \ref{theorem-3}.
When compared to ReBRAC \citep{rebrac}, we observe that although ReBRAC exhibits strong performance on environments such as \texttt{antmaze}, it struggles to effectively learn in more complex and challenging \texttt{humanoidmaze} environments due to the inherent limitations of its conventional policy.
The comparison with IFQL underscores that action candidate sampling alone is insufficient to explain the performance improvement.
In addition, when compared to the backbone algorithm FQL \citep{fql}, the results highlight the effectiveness of our method during online fine-tuning, where FINO demonstrates both efficient exploration and a balanced trade-off between exploration and exploitation.
The improvements observed in the \texttt{navigate} environments further suggest that FINO is well suited to environments where effective exploration is critical.

In addition to the tabular summary, we provide aggregate learning curves by benchmark in Figure \ref{fig:main_result} to visualize the progression of performance over training steps.
Consistent with the results in Table~\ref{tab:main}, the figure shows that FINO consistently outperforms the baselines throughout training.
In particular, the experiments on OGBench demonstrate that, despite starting from the same performance as the backbone algorithm, our method achieves stronger improvements, underscoring its high sample efficiency.

\section{Discussion}
{\subsection{Impact of Noise Injection Point}
\label{subsec:ablation_noise}
\begin{figure}[t!]
  \centering
  \includegraphics[width=0.95\linewidth]{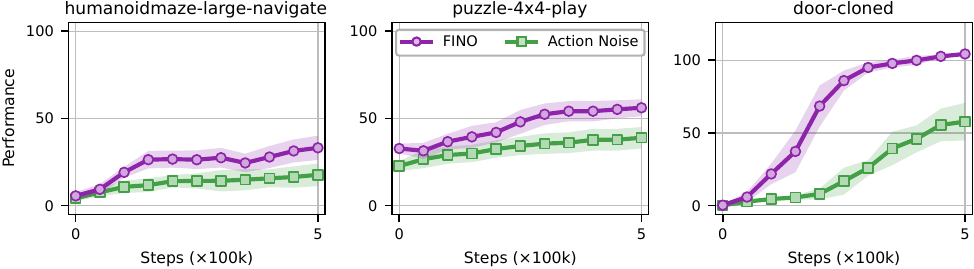}
  \vspace{-1ex}
  \caption{Comparison between FINO and the direct action noise injection baseline. Each plot shows results aggregated over five tasks and averaged across 10 seeds, with shaded regions indicating 95\% confidence intervals.}
  \label{fig:ablation_noise}
  \vspace{-1ex}
\end{figure}

One of the key components of the proposed method is the injection of noise into the flow matching objective, which expands the action space and enables more efficient exploration during online fine-tuning.
To evaluate this design choice, we compare our approach with a simpler alternative that promotes exploration by injecting Gaussian noise into the actions generated by the policy rather than into the flow matching objective (denoted as \textit{Action Noise}).
For a fair comparison, we retain other components such as the action-candidate mechanism and entropy guidance in this baseline as well.

The results in Figure \ref{fig:ablation_noise} demonstrate that the noise injection strategy of the proposed method yields a notable performance gain.
Consistent improvement is observed across both navigation and manipulation environments, indicating that the proposed noise injection scheme remains effective across task categories.
This difference arises because, as mentioned in Section \ref{subsec:noise_injection}, the proposed method enables the one-step policy to maximize action-value over a broader action space, rather than simply adding noise to the action.
Notably, the results on \texttt{door-cloned} show that, when compared with the backbone algorithm FQL (whose performance is approximately 100), simply adding noise to the action fails to facilitate exploration and can even degrade performance.
These findings highlight that the proposed noise injection method serves as an effective approach for promoting exploration during online fine-tuning.
Additional analyses of various noise injection strategies are provided in Appendix \ref{appendix:noise}.

\subsection{Comparison with Entropy-Regulated Noise Scaling}
\label{subsec:ablation_facer}

\begin{wrapfigure}{r}{0.6\textwidth}
    \begin{center}
        \vspace{-3ex}
        \includegraphics[width=0.6\textwidth]{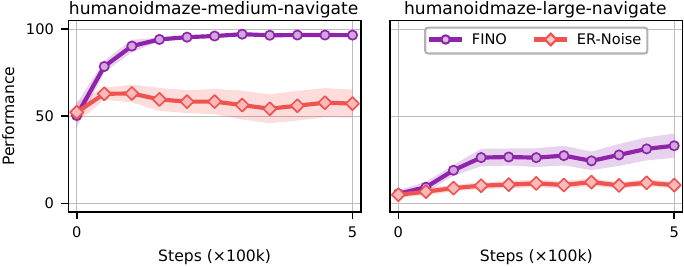}
        \vspace{-3ex}
        \caption{Comparison between FINO and the entropy-regulated noise scaling baseline. Full results are presented in Table \ref{tab:full_result_ablation}.}
        \label{fig:ablation_facer}
    \end{center}
    \vspace{-3ex}
\end{wrapfigure}

In Section \ref{subsec:sampling}, we introduce an entropy-based guidance method for action sampling.
This approach enables the policy to achieve a balanced trade-off between exploration and exploitation during online fine-tuning, which is critical under a limited online budget.
Since previous studies \citep{sac, dacer} have also employed entropy to regulate this balance, we compare our method with an alternative entropy-driven strategy (denoted as \textit{ER-Noise}) to evaluate its effectiveness.
Specifically, instead of using entropy to guide the action sampling, the baseline replaces it with a simpler approach that scales the Gaussian noise based on the entropy.
The noise is then directly added to the action, allowing the action to be adjusted according to the entropy of the policy.

Figure \ref{fig:ablation_facer} shows that the proposed method significantly outperforms entropy-based noise scaling by effectively balancing exploration and exploitation through sampling aligned with the entropy of the policy.
In particular, despite the inherent difficulty of finding relevant actions in high-dimensional action spaces such as \texttt{humanoidmaze}, the proposed method successfully identifies appropriate actions in such settings.
However, the performance of \textit{ER-Noise} indicates that mere noise scaling guided by entropy is insufficient for selecting actions consistent with the policy.
These findings thus confirm that the proposed method effectively leverages entropy to enable sample-efficient learning during online fine-tuning.

\subsection{Analysis of Noise Injection and Entropy-Guided Sampling}
\label{subsec:ablation_entropy}

\begin{figure}[t!]
  \centering
  \includegraphics[width=0.95\linewidth]{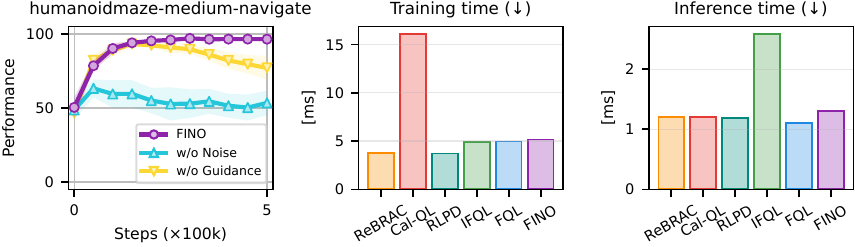}
  \caption{Comparison of performance and computational efficiency. The left figure shows the learning curve on the \texttt{humanoidmaze-medium-navigate} task, averaged over five tasks with 10 random seeds, with shaded regions denoting 95\% confidence intervals. The middle and right figures report the training and inference time per step of each baseline. Full results are presented in Table \ref{tab:full_result_ablation}.}
  \label{fig:noise&compu}
\end{figure}
In our proposed method, we incorporate two key components, namely noise injection during offline pre-training and entropy-guided sampling during online fine-tuning.
To clarify the contribution of each element, we design controlled experiments that reflect its intended role.
In particular, we first examine the case without injected noise (\texttt{w/o Noise}), where the procedure reduces to the same formulation as \Eqref{eq:fbc} while still retaining entropy-guided sampling.
We then consider the case without entropy guidance (\texttt{w/o Guidance}), in which the sampling process still produces action candidates, but the selection is restricted to the one with the highest action-value, thereby excluding the entropy-based balancing mechanism.

The left plot of Figure \ref{fig:noise&compu} demonstrates that both components are indispensable to the effectiveness of our method.
Noise injection, in particular, proves to be especially critical.
This is because, without it, the offline pre-training is limited to actions contained in the offline dataset.
Even when action candidates are generated, this restriction makes them lack diversity, which in turn leads to insufficient exploration and thereby reduces overall performance.
In the absence of entropy guidance, the performance deteriorates in later stages, as the training process fails to maintain an appropriate balance between exploration and exploitation.
These observations suggest that both noise injection and entropy-guided sampling play an important role in enabling sample-efficient learning in the offline-to-online RL setting.

\subsection{Training and Inference Efficiency}
\label{subsec:ablation_cost}

Since computational cost is also an important factor in methods employing generative models, we compare our algorithm with the baselines on \texttt{humanoidmaze-medium}, where it achieves the largest performance improvement.
The middle and right plots of Figure \ref{fig:noise&compu} present the training time and inference time, respectively.
The results show that although additional components such as entropy estimation and action candidate sampling slightly increase training time relative to the backbone algorithm, this increase is negligible when compared to algorithms such as Cal-QL, leaving overall training efficiency largely unaffected.
Regarding inference time, our method requires fewer samples than baselines such as IFQL, demonstrating that the additional computation does not impose a significant overhead.

\subsection{Effect of Action Sample Size ($N_\text{sample}$)}
\label{subsec:ablation_N}

\begin{wrapfigure}{r}{0.33\textwidth}
    \begin{center}
        \vspace{-3ex}
        \includegraphics[width=0.33\textwidth]{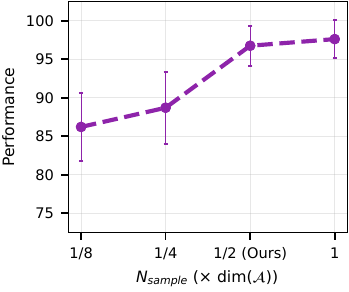}
        \vspace{-4ex}
        \caption{Comparison of performance with varying $N_\text{sample}$.}
        \label{fig:ablation_N}
    \end{center}
    \vspace{-4ex}
\end{wrapfigure}
The proposed method injects noise into the flow matching objective and samples action from a set of action candidates to utilize the expanded policy.
Since the size of the exploration space increases with the action dimension of the task, we set the hyperparameter $N_\text{sample}$, which determines the number of action candidates, to half of the action dimension.
To examine the impact of this hyperparameter, we evaluate performance across 6 environments with action dimensions greater than 10 while varying $N_\text{sample}$.

As shown in Figure \ref{fig:ablation_N}, performance generally improves as the number of action candidates increases, but the marginal gains diminish beyond a certain point.
Since $N_\text{sample}$ directly affects inference cost, we adopt the choice of setting it to half of the action dimension, achieving a practical balance between performance and computational overhead.

\section{Related Work}
\label{sec:related_work}
\textbf{Offline-to-Online Reinforcement Learning.} ~
Offline-to-online RL is a framework that learns through two stages: offline pre-training and online fine-tuning \citep{off2on, pex, famo2o, cal-ql, wsrl}.
The goal is to improve an agent, initially trained on offline data, by allowing it to further refine through environment interaction.
The simplest way to train within this framework is to employ the same offline RL algorithm for both offline pre-training and online fine-tuning \citep{mcq, spot, rebrac}.
However, this strategy inherits the conservative nature of offline RL methods, which restricts exploration during online fine-tuning \citep{aca, cft, so2, pars}.
Several prior studies have attempted to ease this conservatism and support more effective online fine-tuning \citep{famo2o, cal-ql}.
Still, because these approaches remain conservative, they fail to provide sample efficiency through effective exploration \citep{opt}.
In contrast, our method leverages the expressivity of generative models to encourage exploration and regulates it with entropy-guided sampling, thereby achieving high sample efficiency.

\textbf{Reinforcement Learning with Generative Models.} ~
Recent advances in generative models such as denoising diffusion \citep{diffusion2, diffusion1} and flow matching \citep{flow1, flow2} have spurred growing interest in applying these techniques to RL.
Among these efforts, research that employs generative models as policies has shown strong results in both offline \citep{dql, idql, edp, qipo, dc, qcs, dac, fql, fac} and online \citep{dacer, qsm, qvpo} settings by framing the policy as a state-conditioned generative model.
Within the offline-to-online RL framework, there have been studies that exploit the expressive capacity of diffusion models for data augmentation \citep{edis, cfdg}.
However, no prior work has leveraged such expressivity directly as a policy in this setting.
In contrast, our work harnesses the generative model for exploration, demonstrating a method that achieves strong sample efficiency in offline-to-online RL.

\section{Conclusion}
\label{sec:conclusion}
In this work, we propose FINO, a novel approach that leverages the expressivity of flow matching through noise injection and enhances online fine-tuning via entropy-guided sampling.
Noise injection, applied to the offline pre-training, broadens the action space and yields a stronger initialization for exploration, while entropy-guided sampling adapts to the policy’s evolving behavior to maintain a workable exploration–exploitation balance.
FINO achieves sample-efficient learning across diverse and challenging benchmarks while maintaining modest computational cost.
Beyond empirical gains, our study highlights how flow matching can be effectively utilized to address the challenges of offline-to-online RL, and we believe this line of work opens new directions for harnessing generative policies in advancing the broader offline-to-online RL paradigm.

\newpage
\section*{Acknowledgements}
This work was supported in part by the Institute of Information \& Communications Technology Planning \& Evaluation (IITP) grant funded by the Korea government (MSIT) (No. RS-2022-II220469, Development of Core Technologies for Task-oriented Reinforcement Learning for Commercialization of Autonomous Drones, 50\%) and in part by the National Research Foundation of Korea (NRF) grant funded by the Korea government (MSIT) (No. RS-2025-00557589, Generative Model Based Efficient Reinforcement Learning Algorithms for Multi-modal Expansion in Generalized Environments, 50\%). We would like to thank Woohyeon Byeon for providing valuable insights into Theorem~\ref{theorem-3}.
\bibliography{iclr2026_conference}
\bibliographystyle{iclr2026_conference}

\newpage
\appendix
\section{Limitations}
The entropy of the policy is approximated using a Gaussian Mixture Model (GMM)~\citep{gmm}, which incurs computational overhead and remains an approximation rather than an exact calculation~\citep{dacer}.
Future work could focus on developing entropy estimation methods that are both computationally efficient and more precise, or on exploring alternative metrics that capture policy behavior beyond entropy.
In addition, while our method directly addresses the challenges of exploration and the exploration–exploitation trade-off in offline-to-online RL, the issue of distribution shift remains.
Addressing this challenge constitutes another promising direction for future research.

\section{The Use of Large Language Models (LLMs)}
Large Language Models are used solely to aid and polish the writing.
They are not involved in research ideation, methodological design, or experimental analysis.

\newpage
\section{Theoretical Proofs}
\label{appendix:proofs}
\ProbabilityPath*
\begin{proof}
    We consider time-dependent conditional probability paths of flow $\phi_t(x)$ as follows:
    \begin{align}
        \label{appendix-eq-1:pp}
        p_t(x|x_i)=\mathcal{N}\left(x| \mu_t(x_i),\sigma_t^2(x_i)I\right)
    \end{align}
    Following the flow matching~\citep{flow1}, we set the time-dependent mean and variance as
    \begin{align*}
        \mu_t(x_i)=tx_i, \quad \sigma_t(x_i)=1-(1-\sigma_{\text{min}})t,
    \end{align*}
    where $\sigma_{\text{min}}$ is a negligible small constant. The conditional probability paths provide the following canonical transformation for Gaussian distribution, i.e., the flow conditioned on $x_i$:
    \begin{align*}
        \phi_t(x)&=\sigma_t(x_i)x + \mu_t(x_i) = (1-(1-\sigma_{\text{min}})t)x + tx_i\\
        &\approx (1-t)x+tx_i
    \end{align*}

    By injecting the introduced time-dependent noise $\epsilon_t$, we obtain the perturbed flow
    \begin{align*}
        \tilde{\phi}_t(x)&=tx_i + (1-t)x + \epsilon_t,
    \end{align*}
    where $x$ is distributed as normal distribution $\mathcal{N}(0,I)$. 


    Since the injected noise is designed as a Gaussian distribution $\mathcal{N}(0,\alpha_t^2I)$ and is independent over distribution \ref{appendix-eq-1:pp}, the perturbed flow can be expressed as the sum of two Gaussian distributions.
    This leads to the following conditional probability paths:
    \begin{align*}
        \tilde{p}_t(x|x_i) &= \mathcal{N}\left(x|tx_i, ((1-t)^2+\alpha_t^2)I \right) \\
        &= \mathcal{N}\left(x|tx_i, ((1-t)^2+(\eta^2-2\eta)t^2+2\eta t\right)I)\\
        &=\mathcal{N}\left(x|tx_i, (1-(1-\eta)t\right)^2I)
    \end{align*}
    as desired.\
\end{proof}

\FINOequation*
\begin{proof}
    The conditional probability path $p_t(x_t|x_i)=p_t(\phi_t(x_0)|x_i)$ provides the canonical transformation of Gaussian distribution as the perturbed flow $\phi_t(x)$ conditioned on $x_i$:
    \begin{align*}
        x_t=\phi_t(x_0)=tx_i + (1-(1-\eta)t)x_0,
    \end{align*}
    its derivative is the vector field that generates the flow $\phi_t$ by the definition of vector fields of continuous normalizing flow~\citep{flow1}.
    \begin{align*}
        \frac{d}{dt}\phi_t(x_0)&=\frac{d}{dt}\left(tx_i + (1-(1-\eta)t)x_0\right)=x_i-(1-\eta)x_0,
    \end{align*}
    which is the same result as one from Theorem~3 of~\citep{flow1}:
    \begin{align*}
        v_t(x_t|x_i)&=\frac{\sigma_t^{\prime}(x_i)}{\sigma_t(x_i)}(x_t-\mu_t(x_i))+\mu_t^{\prime}(x_i),\\
        &=\frac{-(1-\eta)}{1-(1-\eta)t}(x_t-tx_i)+x_i\\
        &=\frac{-(1-\eta)x_t +(1-\eta)tx_i + (1-(1-\eta)t)x_i}{1-(1-\eta)t}\\
        &=\frac{-(1-\eta)(tx_i+(1-(1-\eta)t)x_0) + x_i}{1-(1-\eta)t}\\
        &=\frac{(1-(1-\eta)t)x_i-(1-\eta)(1-(1-\eta)t)x_0}{1-(1-\eta)t}\\
        &=x_i-(1-\eta)x_0
    \end{align*}
    where $f^{\prime}$ denotes the derivative w.r.t interpolation time $t$, and $\sigma_t(x_i)I=\Sigma_t(x_i)$ and $\mu_t(x_i)$ are the covariance and the mean of $p_t(x_t|x_i)$, respectively. This means that the Gaussian probability path $p_t(x_t|x_i)$ induces the unique conditional vector field $v_t(x_t|x_i)=x_i-(1-\eta)x_0$.

    Since the conditional probability path $p_t(x_t|x_i)$ has the form of Gaussian probability distribution, and the unique vector field $v_t(x_t|x_i)$ generates the perturbed flow $\phi_t(x)$ conditioned on $x_i$, we can use the definition of the marginal probability paths $p_t(x_t)$ and the marginal vector field $v_t(x_t)$ in flow matching~\citep{flow1}:
    \begin{align*}
        &p_t(x)=\int p_t(x_t|x_i)q(x_i)dx_i,\quad p_1(x_i)\approx q(x_i)\\
        &v_t(x)=\int v_t(x|x_i)\frac{p_t(x|x_i)q(x_i)}{p_t(x_i)}dx_i,
    \end{align*}
    where $q$ is a dataset distribution, both marginal probability paths $p_t(x_t)$ and vector field $v_t(x_t)$ satisfy the continuity equation~\citep{flow:ContinuityEquation} (refer to Theorem~1 of \citet{flow1}).
\end{proof}

\FINOvariance*
\begin{proof}
    For notational simplicity, given a data $x_i$, let $p_t^{\text{FINO}}(x)$ and $p_t^{\text{FINO}}(x|x_i)$ be the marginal and conditional probability paths induced by~\eqref{eq:fino}, and $p_t^{\text{FM}}(x)$ and $p_t^{\text{FM}}(x|x_i)$ be them induced by~\eqref{eq:flow_matching}.

    Since data are i.i.d. sampled, we can write the marginal probability paths as follows:
    \begin{align*}
        &p_t^{\text{FINO}}(x)=\int p_t^{\text{FINO}}(x|x_i)q(x_i)dx_i=\frac{1}{N}\sum_{i=1}^{N}p_t^{\text{FINO}}(x|x_i)\\
        &p_t^{\text{FM}}(x)=\int p_t^{\text{FM}}(x|x_i)q(x_i)dx_i=\frac{1}{N}\sum_{i=1}^{N}p_t^{\text{FM}}(x|x_i)
    \end{align*}

    To simplify notation, we denote random variables as $X_t$ and $X_t^i$ that follow a marginal distribution $p_t(x)$ and $p_t(x|x_i)$, respectively, given a data $x_i$. Assume the distributions $p_t(x)$ and $p_t(x|x_i)$ have identity covariances, then, 
    for fixed $t$, the random variable $X_t$ from the marginal probability path $p_t(x)$ has the variance as follows (the following equation can apply to both $p_t^{\text{FINO}}(x)$ and $p_t^{\text{FM}}(x)$ since their conditional probability paths are isotropic Gaussian distributions):
    \begin{align*}
        \mathrm{Var}\left(X_t\right)&=\mathbb{E}_{p_t}[X_t^2]-\mathbb{E}_{p_t}[X_t]^2=\int p_t(x)x^2dx - \left(\int p_t(x) x dx\right)^2\\
        &=\int \left(\frac{1}{N}\sum_{i=1}^{N}p_t^i(x)\right)x^2dx - \left(\int \left(\frac{1}{N}\sum_{i=1}^{N}p_t^i(x)\right) x dx\right)^2\\
        &=\frac{1}{N}\sum_i\int p_t^i(x)x^2dx - \left(\frac{1}{N}\sum_i\int p_t^i(x) x dx\right)^2\\
        &=\frac{1}{N}\sum_i\mathbb{E}_{p_t^i}[X_t^2]-\left(\frac{1}{N}\sum_i\mathbb{E}_{p_t^i}[X_t]\right)^2\\
        &=\frac{1}{N^2}\left(\sum_iN\mathbb{E}_{p_t^i}[X_t^2]-\sum_{i=1}^{N}\sum_{j=1}^{N}\mathbb{E}_{p_t^i}[X_t]\mathbb{E}_{p_t^j}[X_t]\right)\\
        &=\frac{1}{N^2}\left(\sum_iN\mathbb{E}_{p_t^i}[X_t^2]-\sum_i\mathbb{E}_{p_t^i}[X_t]^2-\sum_{i}\sum_{j:j\neq i}\mathbb{E}_{p_t^i}[X_t]\mathbb{E}_{p_t^j}[X_t]\right)\\
        &=\frac{1}{N^2}\left(\sum_iN\mathbb{E}_{p_t^i}[X_t^2]-\sum_iN\mathbb{E}_{p_t^i}[X_t]^2+\sum_i(N-1)\mathbb{E}_{p_t^i}[X_t]^2-\sum_{i}\sum_{j:j\neq i}\mathbb{E}_{p_t^i}[X_t]\mathbb{E}_{p_t^j}[X_t]\right)\\
        &=\frac{1}{N^2}\left(\sum_iN\mathrm{Var}(X_t^i)+\sum_i(N-1)\mathbb{E}_{p_t^i}[X_t]^2-\sum_{i}\sum_{j:j\neq i}\mathbb{E}_{p_t^i}[X_t]\mathbb{E}_{p_t^j}[X_t]\right)\\
    \end{align*}

    Using the equation above, the variance of the marginal probability path  $p_t^{\text{FINO}}$ can be rewritten as
    \begin{align}
        \label{appendix-eq-t2-1}
        \mathrm{Var}(X_t^{\text{FINO}})=\frac{1}{N^2}\left(\sum_{i=1}^{N}N\sigma_t^{i,\text{FINO}}d + \sum_i(N-1)\mu_t^{i,\text{FINO}}-\sum_i\sum_{j:j\neq i}\mu_t^{i,\text{FINO}}\mu_t^{j,\text{FINO}}\right),
    \end{align}
    where $d$ is the dimension of data $x_i$, given data $x_i$, $\sigma_t^{i,\text{FINO}}$ is the variance of the conditional probability path $p_t^{\text{FINO}}(x|x_i)$, and $\mu_t^{i,\text{FINO}}$ is the mean of the path.

    By the same argument, the variance of the marginal probability path of FM $p_t^{\text{FM}}$
    \begin{align}
        \label{appendix-eq-t2-2}
        \mathrm{Var}(X_t^{\text{FM}})=\frac{1}{N^2}\left(\sum_{i=1}^{N}N\sigma_t^{i,\text{FM}}d + \sum_i(N-1)\mu_t^{i,\text{FM}}-\sum_i\sum_{j:j\neq i}\mu_t^{i,\text{FM}}\mu_t^{j,\text{FM}}\right),
    \end{align}
    where $d$ is the dimension of data $x_i$, given data $x_i$, $\sigma_t^{i,\text{FM}}$ is the variance of the conditional probability path $p_t^{\text{FM}}(x|x_i)$, and $\mu_t^{i,\text{FM}}$ is the mean of the path.

    From Proposition~\ref{theorem-1}, we already have $\mu_t^{i,\text{FINO}}=\mu_t^{i,\text{FM}}$ and $\sigma_t^{i,\text{FINO}}\geq\sigma_t^{i,\text{FM}}$, by subtracting \eqref{appendix-eq-t2-1} from \eqref{appendix-eq-t2-2}, then we obtain
    \begin{align*}
        \mathrm{Var}(X_t^{\text{FINO}})-\mathrm{Var}(X_t^{\text{FM}})=\frac{1}{N^2}\sum_{i=1}^{N}Nd\left(\sigma_t^{i,\text{FINO}}-\sigma_t^{i,\text{FM}}\right)\geq0
    \end{align*}
\end{proof}

\newpage
\section{Analysis of Noise Injection}
\label{appendix:noise}
In Section \ref{subsec:noise_injection}, we describe a training approach that injects noise into the flow matching objective, enabling the model to learn over a broader action space.
In this section, we discuss alternative noise injection strategies that can be applied to the flow matching objective and illustrate their effects through a toy example.
Throughout this section, we assume the use of zero-mean noise $\epsilon \sim p_\text{noise
}$ (e.g., Gaussian Noise).

\subsection{Case 1: Injecting Noise to Velocity Target}

Adding noise to the target velocity in \Eqref{eq:fbc} is the simplest form of noise injection.
The flow matching objective with the added noise can be written as follows:
\begin{align*}
    \mathcal{L}_{\pi}(\theta)&=\mathbb{E}_{\substack{x_0\sim \mathcal{N}(0,I),\\s,a=x_1\sim D,\\t\sim\text{Unif}([0,1]),\\\epsilon\sim p_\text{noise
}}}\left[||v_\theta(t,s,x_t)-(x_1-x_0) - \epsilon||^2_2\right]\\
    &=\mathbb{E}\left[\|v_\theta(t,s,x_t)-(x_1-x_0)\|_2^2-2\, (v_\theta(t,s,x_t)-(x_1-x_0))^\top \epsilon+\|\epsilon\|_2^2\right].
    \label{eq:noise_target}
\end{align*}
Since the noise has zero mean ($\mathbb{E}_{\epsilon\sim p_\text{sample}}[\epsilon] = 0$), the second term becomes zero in expectation.
Furthermore, because $\epsilon$ is independent of $\theta$, the last term $\|\epsilon\|_2^2$ is a constant with respect to $\theta$, so it does not contribute to the gradient during optimization.
As a result, the total gradient of the objective is identical to that of the original flow matching objective in \Eqref{eq:fbc}, which means that training proceeds in exactly the same way in expectation.

\newpage
\subsection{Case 2: Injecting Noise to Policy Action}

A straightforward way to encourage exploration is to inject noise directly to the actions generated by the policy.
However, as shown in Section \ref{subsec:ablation_noise}, this approach leads to limited improvement during online fine-tuning.
This difference stems from the training process of the one-step policy, as described in Section \ref{subsec:noise_injection}, which is the component that directly interacts with the environment.

The one-step policy is trained with the following objective:
\begin{equation*}
    \mathcal{L}_\pi(\omega)=\mathbb{E}_{\substack{s\sim D,\\z\sim\mathcal{N}(0,I),\\a_\omega(s,z)\sim\pi_\omega}}
    \left[\;-\;Q_\phi(s,a_\omega(s,z))\;+\;\alpha\,\|a_\omega(s,z)-a_\theta(s,z)\|_2^2\right].
\end{equation*}
It distills the flow model trained from the offline dataset while simultaneously maximizing action-value through the value function.
Since our algorithm employs the flow model trained with the expanded action space from \Eqref{eq:fino}, the resulting one-step policy learns to explore regions that are more informative for improving action-values.
In contrast, simply adding noise to the action ignores action-value information, making it an inherently less efficient exploration strategy.

To illustrate this difference, we conduct a comparative experiment in a toy example.
In this setting, the state is fixed, and the action is two-dimensional, corresponding to the x- and y- axes in the figure.
The dataset is sampled from a Gaussian distribution centered at the origin, and the reward increases monotonically toward the right.
We train two flow models using Equations \ref{eq:fbc} and \ref{eq:fino}, respectively, and each flow model is then used to train a separate one-step policy via \Eqref{eq:fql}.
The baseline that uses the flow model trained with \Eqref{eq:fbc} and injects Gaussian noise directly into the action output is referred to as the \textit{Action Noise}.
The approach based on the flow model trained with \Eqref{eq:fino} corresponds to our proposed method, which does not apply any modification to the action output.

\begin{figure}[h!]
  \centering
  \includegraphics[width=0.5\linewidth]{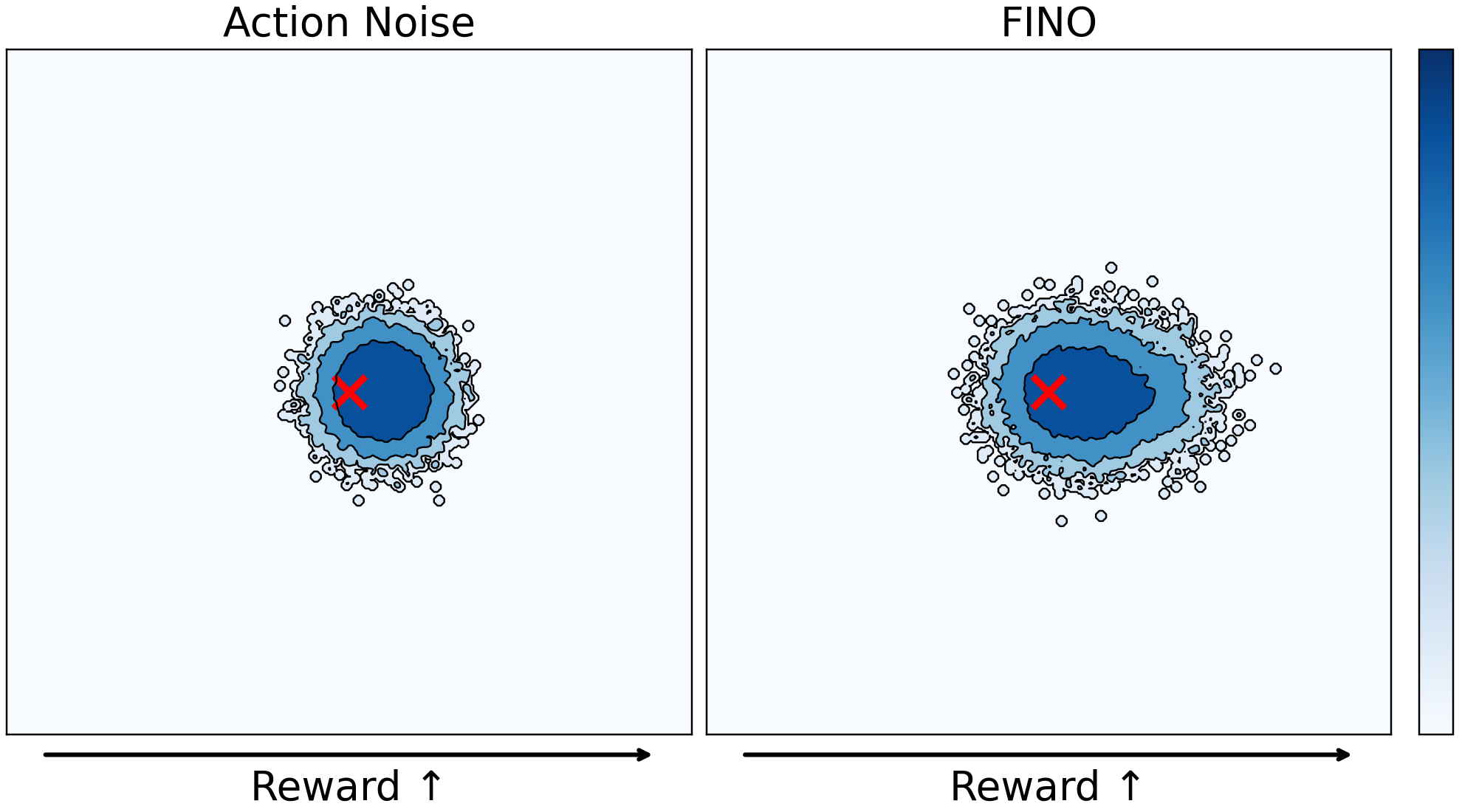}
  \caption{Action samples of two one-step policies: blue contours indicate the log-density of the sampled actions, and red × marks denote the centers of the dataset.}
  \label{fig:noise_appendix}
\end{figure}

Figure \ref{fig:noise_appendix} shows that, since the reward increases toward the right, the action samples generated by both policies shift rightward relative to the dataset.
While adding noise directly to the actions causes the samples to spread in random directions, FINO guides the samples toward regions with higher action-values.
This is because the method leverages both the expanded action space and the action-value maximization objective.
As a result, the one-step policy is guided toward a more informed learning direction, leading to more effective exploration during online fine-tuning and ultimately explaining the superior performance of our approach.

\newpage
\section{Implementation Details}
\label{appendix:implementation}
We implement our proposed method, FINO, based on the official implementation of FQL\footnote{https://github.com/seohongpark/fql}.
In FINO, the entropy estimation module is adapted from the official DACER implementation\footnote{https://github.com/happy-yan/DACER-Diffusion-with-Online-RL}.
For all baselines except Cal-QL, we adopt the components provided in the FQL implementation, while for Cal-QL\footnote{https://github.com/nakamotoo/Cal-QL} we rely on its official implementation.
\subsection{Entropy Estimation}
\label{appendix:entropy}
In our proposed algorithm, entropy-guided sampling requires an estimate of policy entropy.
However, since the one-step policy training objective combines behavior cloning from flow policy with action-value maximization, the entropy cannot be derived in closed form.
To address this, we adopt an estimation strategy introduced in prior work \citep{dacer}.

To compute the policy entropy, we employ a Gaussian Mixture Model (GMM) as an approximation of the action distribution.
A GMM represents complex data distributions by combining multiple Gaussian components.
Formally, the likelihood of a sample under a GMM is defined as a mixture of Gaussian densities:
\begin{equation}
    p(a) = \displaystyle\sum_{k=1}^K\pi_k\mathcal{N}(a|\mu_k,\Sigma_k)
\end{equation}

where $K$ denotes the number of Gaussian components and $\pi_k\in[0, 1]$ is the mixing coefficient that specifies the probability of selecting the $k$-th Gaussian, satisfying $\sum_{k=1}^K\pi_k=1$.

Training a GMM corresponds to estimating its parameters such that the likelihood of the given data is maximized.
To approximate the action distribution of the policy using a GMM, we first sample multiple actions ($a^1, a^2, \cdots, a^N$) from the policy for each state.
We then fit the GMM to these samples using the Expectation-Maximization (EM) algorithm.
The EM algorithm alternates between two iterative phases, namely the expectation step and the maximization step.
In the expectation step, the latent probabilities required to compute the likelihood are estimated:
\begin{equation}
    \gamma(z_k^n)=\frac{\pi_k\mathcal{N}(a^n|\mu_k,\Sigma_k)}{\sum_{i=1}^{K}\pi_i\mathcal{N}(a^n|\mu_i,\Sigma_i)}
\end{equation}

where $\gamma(z_k^n)$ denotes that under the current parameter estimates, the observed data $a^n$ come from the $k$-th component of the probability.
In the maximization step, the GMM parameters are updated based on these probabilities:
\begin{equation}
    \pi_k=\frac{1}{N}\displaystyle\sum_{n=1}^N\gamma(z_k^n),\quad \mu_k=\frac{\sum_{n=1}^N\gamma(z_k^n)\cdot a^n}{\sum_{n=1}^N\gamma(z_k^n)}, \quad \Sigma_k=\frac{\sum_{n=1}^N\gamma(z_k^n)(a^n-\mu_k)(a^n-\mu_k)^\text{T}}{\sum_{n=1}^N\gamma(z_k^n)}
\end{equation}

By repeating these two steps until convergence, we obtain a GMM that approximates the action distribution of the policy.

The entropy of the fitted GMM is then computed following the approach established in prior work \citep{gmm}:
\begin{equation}
    \mathcal{H}\approx\displaystyle\sum_{k=1}^K\pi_k\cdot(-\log\pi_k+\frac{1}{2}\log((2\pi e)^d|\Sigma_k|))
\end{equation}

where $d$ denotes the dimensionality of the action space.
The entropy estimate of the policy is obtained by averaging this quantity across a batch of sampled states.
In practice, we set the number of mixture components to $K=3$, which we found sufficient across all tasks.
The number of states used for entropy estimation is determined by the batch size, and the number of actions sampled per state is fixed at 200, following prior work \citep{dacer}.

\newpage
\section{Experimental Details}
\label{appendix:experiment}
\subsection{Benchmarks}
\label{appendix:benchmark}
We conduct experiments on 35 tasks from OGBench \citep{ogbench} and 10 tasks from D4RL \citep{d4rl}.
For OGBench, we adopt single-task variants (\texttt{singletask}) provided in the benchmark and configure them to fit the offline-to-online RL framework.
Each environment consists of five distinct tasks, each defined by a different goal.
In our experiments, we use the following 7 environments and datasets:
\begin{itemize}
    \item humanoidmaze-medium-navigate-v0
    \item humanoidmaze-large-navigate-v0
    \item antmaze-large-navigate-v0
    \item antmaze-giant-navigate-v0
    \item antsoccer-arena-navigate-v0
    \item cube-double-play-v0
    \item puzzle-4x4-play-v0
\end{itemize}
In \texttt{humanoidmaze}, the objective is to control a humanoid robot with a 21-dimensional action space to reach the designated goal.
In \texttt{antmaze} and \texttt{antsoccer}, the agent controls a quadrupedal robot with an 8-dimensional action space to navigate the goal; in \texttt{antsoccer}, the robot is required to move a ball to the goal.
For \texttt{cube} and \texttt{puzzle}, the agent manipulates a robotic arm with a 5-dimensional action space.
The \texttt{cube} task requires pick-and-place, while the \texttt{puzzle} task involves pressing the buttons to solve the puzzle.

For D4RL, we evaluate on the following environments and datasets:
\begin{itemize}
    \item antmaze-umaze-v2
    \item antmaze-umaze-diverse-v2
    \item antmaze-medium-play-v2
    \item antmaze-medium-diverse-v2
    \item antmaze-large-play-v2
    \item antmaze-large-diverse-v2
    \item pen-cloned-v1
    \item door-cloned-v1
    \item hammer-cloned-v1
    \item relocate-cloned-v1
\end{itemize}
The \texttt{antamze} tasks share the same 8-dimensional robot as in OGBench but differ in their environment and dataset settings.
The \texttt{adroit} suite (\texttt{pen, door, hammer, relocate}) involves high-dimensional dexterous manipulation tasks, with action spaces exceeding 24 dimensions.

\subsection{Evaluation}
\label{appendix:evaluation}
We evaluate all baselines during online fine-tuning by reporting the average return over 50 episodes every 50,000 environment steps.
For OGBench and the antamze tasks in D4RL, we follow the original evaluation protocol and use the success rate as the performance metric, while for the adroit suite in D4RL, we adopt the normalized score.
All experiments are conducted with 10 random seeds.

\newpage
\section{Hyperparameter Settings}
\label{appendix:hyperparameter}
\subsection{FINO}
Since our method builds on FQL \citep{fql} as the backbone algorithm, we retain all hyperparameters from FQL without modification.
The hyperparameters introduced in our method, namely $\alpha_t$ for noise injection and $N_\text{sample}$ and $\bar{\mathcal{H}}$ for entropy-guided sampling, are configured depending on the environment settings.
The entropy update interval ($N_\xi$) is aligned with the evaluation frequency and set to 50,000 steps.
A complete list of hyperparameters is provided in Table \ref{tab:hyperparameter_fino}.

\begin{table}[h!]
    \centering
    \vskip 0.1in
    \renewcommand{\arraystretch}{1.1}
    \vspace{-1ex}
    \caption{Hyperparameters}
    \resizebox{0.8\textwidth}{!}{%
    \begin{tabular}{l|l}
    \toprule
    Hyperparameter                  & Value   \\ \midrule
    Noise constant $\eta$                  & 0.05 $\cdot |\mathcal{A}|$   \\
    Candidate action samples $N_\text{sample}$ & 0.5 $\cdot \dim(\mathcal{A})$ \\ 
    Target entropy $\bar{\mathcal{H}}$      & $-\dim(\mathcal{A})$ \\
    Entropy update steps $N_\xi$          & 50,000   \\ 
    Standard deviation of noise $\alpha_t$  & $\eta \cdot\exp(5(t-1))$\\ \midrule
    Hyperparameter (from FQL)     & Value                  \\ \midrule
    Learning rate                 & 0.0003    \\
    Optimizer                      & Adam \citep{adam}   \\
    Minibatch size                    & 256    \\
    MLP dimensions                 & [512, 512, 512, 512]    \\
    Nonlinearity                      & GELU \citep{gelu}   \\
    Target network smoothing coefficient                   & 0.005   \\
    Discount factor $\gamma$               & 0.99 (default), 0.995 (\texttt{antmaze-giant, humanoidmaze, antsoccer})   \\
    Flow steps                    & 10   \\
    Flow time sampling distribution                  & Unif([0, 1])   \\
    Clipped double Q-learning                  & False (default), True (\texttt{adroit, antmaze-{large, giant}-navigate})  \\
    BC coefficient $\alpha$                  & Table \ref{tab:hyperparameter_others}   \\
    \bottomrule
    \end{tabular}%
    }
    \label{tab:hyperparameter_fino}
\end{table}

\subsection{Other Baselines}
For the other baselines, we retain the hyperparameters used in FQL \citep{fql}.
For ReBRAC \citep{rebrac}, we treat the actor bc coefficient ($\alpha_1$) and the critic bc coefficient ($\alpha_2$) as tunable hyperparameters, while keeping all other settings at their default values.
For Cal-QL, the cql regularizer coefficient ($\alpha$) and the target action gap ($\beta$) are used as hyperparameters.
Regarding the network size, we set it to [256, 256, 256, 256] for manipulation tasks and [512, 512, 512, 512] for locomotion tasks of OGBench, with all other parameters kept at their default values.
For RLPD, we use a re-implementation of RLPD from the codebase of FQL and adopt the same configuration as FQL, including setting the update-to-data ratio to 1 and using two value functions.
For IFQL, the only hyperparameter is the number of action samples ($N$).
For FQL, the behavior cloning coefficient ($\alpha$) is the sole hyperparameter.
The task-specific hyperparameters are summarized in Table \ref{tab:hyperparameter_others}.

\begin{table}[h]
    \centering
    \vskip 0.1in
    \renewcommand{\arraystretch}{1.1}
    \vspace{-1ex}
    \caption{Task-specific hyperparameters for each baseline.}
    \resizebox{0.75\textwidth}{!}{%
    \begin{tabular}{l|cccc}
    \toprule
    Task & ReBRAC $(\alpha_1, \alpha_2)$ & Cal-QL $(\alpha, \beta)$ & IFQL ($N$) & FQL ($\alpha$) \\
    \midrule
    humanoidmaze-medium-navigate-v0 & (0.01, 0.01) & (5, 0.8)   & 32  & 100  \\
    humanoidmaze-large-navigate-v0  & (0.01, 0.01) & (5, 0.8)   & 32  & 30   \\
    antmaze-large-navigate-v0       & (0.003, 0.01)& (5, 0.8)   & 32  & 30    \\
    antmaze-giant-navigate-v0       & (0.003, 0.01)& (5, 0.8)   & 32  & 10    \\
    antsoccer-arena-navigate-v0     & (0.01, 0.01) & (5, 0.2)   & 64  & 30   \\
    cube-double-play-v0             & (0.1, 0)     & (0.01, 1)  & 32  & 300  \\
    puzzle-4x4-play-v0              & (0.3, 0.01)  & (0.003, 1) & 32  & 1000 \\
    \midrule
    antmaze-umaze-v2            & (0.003, 0.002) & (5, 0.8) & 32  & 10   \\
    antmaze-umaze-diverse-v2    & (0.003, 0.001) & (5, 0.8) & 32  & 10   \\
    antmaze-medium-play-v2      & (0.001, 0.0005)& (5, 0.8) & 32  & 10   \\
    antmaze-medium-diverse-v2   & (0.001, 0)     & (5, 0.8) & 32  & 10   \\
    antmaze-large-play-v2       & (0.002, 0.001) & (5, 0.8) & 32  & 3    \\
    antmaze-large-diverse-v2    & (0.002, 0.002) & (5, 0.8) & 32  & 3    \\
    \midrule
    pen-cloned-v1      & (0.05, 0.5) & (1, 0.8) & 128 & 1000  \\
    door-cloned-v1     & (0.01, 0.1) & (1, 0.8) & 128 & 1000  \\
    hammer-cloned-v1   & (0.1, 0.5)  & (1, 0.8) & 128 & 1000  \\
    relocate-cloned-v1 & (0.1, 0.01) & (1, 0.8) & 128 & 10000 \\
    \bottomrule
    \end{tabular}%
    }
\label{tab:hyperparameter_others}
\end{table}

\newpage
\begin{table}[h!]
    \centering
    \footnotesize
    \vskip 0.1in
    \renewcommand{\arraystretch}{1.1}
    \caption{Full results for main experiments (corresponding to Table~\ref{tab:main} and Fig.~\ref{fig:main_result}). Scores show offline pre-training $\rightarrow$ online fine-tuning, averaged over 10 seeds (mean $\pm$ 95\% CI). For OGBench, the \texttt{singletask} suffix is omitted.}
    \resizebox{\textwidth}{!}{%
    \begin{tabular}{l|*{6}{r@{\,$\rightarrow$\,}l}}
    \toprule
    Environment
      & \multicolumn{2}{c}{ReBRAC}
      & \multicolumn{2}{c}{Cal-QL}
      & \multicolumn{2}{c}{RLPD}
      & \multicolumn{2}{c}{IFQL}
      & \multicolumn{2}{c}{FQL}
      & \multicolumn{2}{c}{FINO} \\
    \midrule
    OGBench humanoidmaze-medium-navigate-task1
      & \res{14{\tiny{$\pm$7}}}{1{\tiny{$\pm$1}}}
      & \res{0{\tiny{$\pm$0}}}{0{\tiny{$\pm$0}}}
      & \res{0{\tiny{$\pm$0}}}{0{\tiny{$\pm$0}}}
      & \res{65{\tiny{$\pm$18}}}{45{\tiny{$\pm$11}}}
      & \res{11{\tiny{$\pm$5}}}{14{\tiny{$\pm$4}}}
      & \res{13{\tiny{$\pm$3}}}{\textbf{91}{\tiny{$\pm$3}}} \\
    OGBench humanoidmaze-medium-navigate-task2
      & \res{18{\tiny{$\pm$9}}}{1{\tiny{$\pm$1}}}
      & \res{0{\tiny{$\pm$0}}}{0{\tiny{$\pm$0}}}
      & \res{0{\tiny{$\pm$0}}}{0{\tiny{$\pm$0}}}
      & \res{92{\tiny{$\pm$4}}}{83{\tiny{$\pm$7}}}
      & \res{89{\tiny{$\pm$13}}}{90{\tiny{$\pm$3}}}
      & \res{77{\tiny{$\pm$25}}}{\textbf{99}{\tiny{$\pm$1}}} \\
    OGBench humanoidmaze-medium-navigate-task3
      & \res{30{\tiny{$\pm$14}}}{1{\tiny{$\pm$1}}}
      & \res{0{\tiny{$\pm$0}}}{0{\tiny{$\pm$0}}}
      & \res{0{\tiny{$\pm$0}}}{1{\tiny{$\pm$1}}}
      & \res{38{\tiny{$\pm$30}}}{74{\tiny{$\pm$18}}}
      & \res{56{\tiny{$\pm$23}}}{89{\tiny{$\pm$2}}}
      & \res{52{\tiny{$\pm$24}}}{\textbf{99}{\tiny{$\pm$1}}} \\
    OGBench humanoidmaze-medium-navigate-task4
      & \res{17{\tiny{$\pm$11}}}{1{\tiny{$\pm$1}}}
      & \res{0{\tiny{$\pm$0}}}{0{\tiny{$\pm$0}}}
      & \res{0{\tiny{$\pm$0}}}{0{\tiny{$\pm$0}}}
      & \res{0{\tiny{$\pm$0}}}{56{\tiny{$\pm$9}}}
      & \res{9{\tiny{$\pm$12}}}{18{\tiny{$\pm$6}}}
      & \res{11{\tiny{$\pm$4}}}{\textbf{94}{\tiny{$\pm$3}}} \\
    OGBench humanoidmaze-medium-navigate-task5
      & \res{28{\tiny{$\pm$14}}}{10{\tiny{$\pm$16}}}
      & \res{0{\tiny{$\pm$0}}}{0{\tiny{$\pm$0}}}
      & \res{0{\tiny{$\pm$0}}}{4{\tiny{$\pm$3}}}
      & \res{99{\tiny{$\pm$1}}}{92{\tiny{$\pm$3}}}
      & \res{100{\tiny{$\pm$0}}}{94{\tiny{$\pm$3}}}
      & \res{99{\tiny{$\pm$1}}}{\textbf{100}{\tiny{$\pm$1}}} \\
    \midrule
    OGBench humanoidmaze-large-navigate-task1
      & \res{1{\tiny{$\pm$1}}}{0{\tiny{$\pm$0}}}
      & \res{0{\tiny{$\pm$0}}}{0{\tiny{$\pm$0}}}
      & \res{0{\tiny{$\pm$0}}}{0{\tiny{$\pm$0}}}
      & \res{1{\tiny{$\pm$1}}}{0{\tiny{$\pm$0}}}
      & \res{4{\tiny{$\pm$3}}}{1{\tiny{$\pm$1}}}
      & \res{5{\tiny{$\pm$4}}}{\textbf{5}{\tiny{$\pm$9}}} \\
    OGBench humanoidmaze-large-navigate-task2
      & \res{0{\tiny{$\pm$0}}}{0{\tiny{$\pm$0}}}
      & \res{0{\tiny{$\pm$0}}}{0{\tiny{$\pm$0}}}
      & \res{0{\tiny{$\pm$0}}}{0{\tiny{$\pm$0}}}
      & \res{0{\tiny{$\pm$0}}}{0{\tiny{$\pm$0}}}
      & \res{0{\tiny{$\pm$0}}}{0{\tiny{$\pm$0}}}
      & \res{0{\tiny{$\pm$0}}}{\textbf{6}{\tiny{$\pm$6}}} \\
    OGBench humanoidmaze-large-navigate-task3
      & \res{9{\tiny{$\pm$6}}}{2{\tiny{$\pm$2}}}
      & \res{0{\tiny{$\pm$0}}}{0{\tiny{$\pm$0}}}
      & \res{0{\tiny{$\pm$0}}}{0{\tiny{$\pm$0}}}
      & \res{45{\tiny{$\pm$10}}}{41{\tiny{$\pm$7}}}
      & \res{17{\tiny{$\pm$6}}}{36{\tiny{$\pm$11}}}
      & \res{22{\tiny{$\pm$10}}}{\textbf{99}{\tiny{$\pm$1}}} \\
    OGBench humanoidmaze-large-navigate-task4
      & \res{1{\tiny{$\pm$1}}}{1{\tiny{$\pm$1}}}
      & \res{0{\tiny{$\pm$0}}}{0{\tiny{$\pm$0}}}
      & \res{0{\tiny{$\pm$0}}}{0{\tiny{$\pm$0}}}
      & \res{0{\tiny{$\pm$0}}}{5{\tiny{$\pm$3}}}
      & \res{2{\tiny{$\pm$2}}}{8{\tiny{$\pm$8}}}
      & \res{0{\tiny{$\pm$0}}}{\textbf{48}{\tiny{$\pm$29}}} \\
    OGBench humanoidmaze-large-navigate-task5
      & \res{1{\tiny{$\pm$1}}}{0{\tiny{$\pm$1}}}
      & \res{0{\tiny{$\pm$0}}}{0{\tiny{$\pm$0}}}
      & \res{0{\tiny{$\pm$0}}}{0{\tiny{$\pm$0}}}
      & \res{8{\tiny{$\pm$10}}}{5{\tiny{$\pm$3}}}
      & \res{1{\tiny{$\pm$1}}}{\textbf{5}{\tiny{$\pm$7}}}
      & \res{0{\tiny{$\pm$0}}}{\textbf{8}{\tiny{$\pm$16}}} \\
    \midrule
    OGBench antmaze-large-navigate-task1
      & \res{94{\tiny{$\pm$4}}}{\textbf{100}{\tiny{$\pm$0}}}
      & \res{20{\tiny{$\pm$26}}}{30{\tiny{$\pm$30}}}
      & \res{0{\tiny{$\pm$0}}}{\textbf{93}{\tiny{$\pm$8}}}
      & \res{32{\tiny{$\pm$10}}}{66{\tiny{$\pm$15}}}
      & \res{82{\tiny{$\pm$5}}}{\textbf{98}{\tiny{$\pm$1}}}
      & \res{82{\tiny{$\pm$6}}}{\textbf{98}{\tiny{$\pm$2}}} \\
    OGBench antmaze-large-navigate-task2
      & \res{88{\tiny{$\pm$2}}}{\textbf{98}{\tiny{$\pm$1}}}
      & \res{0{\tiny{$\pm$0}}}{0{\tiny{$\pm$0}}}
      & \res{0{\tiny{$\pm$0}}}{54{\tiny{$\pm$25}}}
      & \res{17{\tiny{$\pm$7}}}{70{\tiny{$\pm$4}}}
      & \res{63{\tiny{$\pm$5}}}{71{\tiny{$\pm$4}}}
      & \res{62{\tiny{$\pm$6}}}{\textbf{97}{\tiny{$\pm$1}}} \\
    OGBench antmaze-large-navigate-task3
      & \res{65{\tiny{$\pm$14}}}{\textbf{100}{\tiny{$\pm$0}}}
      & \res{20{\tiny{$\pm$26}}}{10{\tiny{$\pm$20}}}
      & \res{0{\tiny{$\pm$0}}}{\textbf{99}{\tiny{$\pm$1}}}
      & \res{57{\tiny{$\pm$9}}}{88{\tiny{$\pm$4}}}
      & \res{94{\tiny{$\pm$2}}}{\textbf{100}{\tiny{$\pm$1}}}
      & \res{92{\tiny{$\pm$3}}}{\textbf{100}{\tiny{$\pm$0}}} \\
    OGBench antmaze-large-navigate-task4
      & \res{86{\tiny{$\pm$5}}}{\textbf{99}{\tiny{$\pm$1}}}
      & \res{0{\tiny{$\pm$0}}}{0{\tiny{$\pm$0}}}
      & \res{0{\tiny{$\pm$0}}}{86{\tiny{$\pm$11}}}
      & \res{13{\tiny{$\pm$5}}}{75{\tiny{$\pm$8}}}
      & \res{80{\tiny{$\pm$5}}}{96{\tiny{$\pm$1}}}
      & \res{83{\tiny{$\pm$4}}}{\textbf{99}{\tiny{$\pm$1}}} \\
    OGBench antmaze-large-navigate-task5
      & \res{90{\tiny{$\pm$4}}}{\textbf{100}{\tiny{$\pm$1}}}
      & \res{20{\tiny{$\pm$26}}}{20{\tiny{$\pm$26}}}
      & \res{0{\tiny{$\pm$0}}}{69{\tiny{$\pm$24}}}
      & \res{39{\tiny{$\pm$10}}}{59{\tiny{$\pm$23}}}
      & \res{85{\tiny{$\pm$4}}}{95{\tiny{$\pm$2}}}
      & \res{85{\tiny{$\pm$5}}}{\textbf{99}{\tiny{$\pm$1}}} \\
    \midrule
    OGBench antmaze-giant-navigate-task1
      & \res{53{\tiny{$\pm$16}}}{\textbf{97}{\tiny{$\pm$1}}}
      & \res{0{\tiny{$\pm$0}}}{0{\tiny{$\pm$0}}}
      & \res{0{\tiny{$\pm$0}}}{7{\tiny{$\pm$12}}}
      & \res{0{\tiny{$\pm$0}}}{0{\tiny{$\pm$0}}}
      & \res{8{\tiny{$\pm$7}}}{65{\tiny{$\pm$24}}}
      & \res{3{\tiny{$\pm$4}}}{\textbf{96}{\tiny{$\pm$1}}} \\
    OGBench antmaze-giant-navigate-task2
      & \res{25{\tiny{$\pm$17}}}{\textbf{98}{\tiny{$\pm$1}}}
      & \res{0{\tiny{$\pm$0}}}{0{\tiny{$\pm$0}}}
      & \res{0{\tiny{$\pm$0}}}{48{\tiny{$\pm$24}}}
      & \res{0{\tiny{$\pm$0}}}{0{\tiny{$\pm$0}}}
      & \res{17{\tiny{$\pm$11}}}{96{\tiny{$\pm$1}}}
      & \res{0{\tiny{$\pm$1}}}{\textbf{99}{\tiny{$\pm$1}}} \\
    OGBench antmaze-giant-navigate-task3
      & \res{34{\tiny{$\pm$20}}}{\textbf{86}{\tiny{$\pm$19}}}
      & \res{0{\tiny{$\pm$0}}}{0{\tiny{$\pm$0}}}
      & \res{0{\tiny{$\pm$0}}}{44{\tiny{$\pm$18}}}
      & \res{0{\tiny{$\pm$0}}}{0{\tiny{$\pm$0}}}
      & \res{0{\tiny{$\pm$1}}}{2{\tiny{$\pm$2}}}
      & \res{0{\tiny{$\pm$0}}}{0{\tiny{$\pm$0}}} \\
    OGBench antmaze-giant-navigate-task4
      & \res{0{\tiny{$\pm$0}}}{\textbf{98}{\tiny{$\pm$1}}}
      & \res{0{\tiny{$\pm$0}}}{0{\tiny{$\pm$0}}}
      & \res{0{\tiny{$\pm$0}}}{59{\tiny{$\pm$26}}}
      & \res{0{\tiny{$\pm$0}}}{0{\tiny{$\pm$0}}}
      & \res{10{\tiny{$\pm$13}}}{\textbf{96}{\tiny{$\pm$2}}}
      & \res{29{\tiny{$\pm$23}}}{\textbf{99}{\tiny{$\pm$1}}} \\
    OGBench antmaze-giant-navigate-task5
      & \res{61{\tiny{$\pm$12}}}{\textbf{99}{\tiny{$\pm$1}}}
      & \res{10{\tiny{$\pm$20}}}{0{\tiny{$\pm$0}}}
      & \res{0{\tiny{$\pm$0}}}{80{\tiny{$\pm$12}}}
      & \res{4{\tiny{$\pm$4}}}{0{\tiny{$\pm$0}}}
      & \res{43{\tiny{$\pm$21}}}{\textbf{99}{\tiny{$\pm$1}}}
      & \res{36{\tiny{$\pm$15}}}{\textbf{99}{\tiny{$\pm$1}}} \\
    \midrule
    OGBench antsoccer-arena-navigate-task1
      & \res{1{\tiny{$\pm$1}}}{0{\tiny{$\pm$0}}}
      & \res{0{\tiny{$\pm$0}}}{0{\tiny{$\pm$0}}}
      & \res{0{\tiny{$\pm$0}}}{6{\tiny{$\pm$4}}}
      & \res{69{\tiny{$\pm$15}}}{64{\tiny{$\pm$10}}}
      & \res{82{\tiny{$\pm$4}}}{\textbf{91}{\tiny{$\pm$2}}}
      & \res{77{\tiny{$\pm$6}}}{\textbf{93}{\tiny{$\pm$2}}} \\
    OGBench antsoccer-arena-navigate-task2
      & \res{0{\tiny{$\pm$1}}}{0{\tiny{$\pm$0}}}
      & \res{0{\tiny{$\pm$0}}}{0{\tiny{$\pm$0}}}
      & \res{0{\tiny{$\pm$0}}}{5{\tiny{$\pm$4}}}
      & \res{70{\tiny{$\pm$7}}}{66{\tiny{$\pm$21}}}
      & \res{88{\tiny{$\pm$4}}}{\textbf{97}{\tiny{$\pm$3}}}
      & \res{84{\tiny{$\pm$5}}}{\textbf{98}{\tiny{$\pm$1}}} \\
    OGBench antsoccer-arena-navigate-task3
      & \res{0{\tiny{$\pm$0}}}{0{\tiny{$\pm$0}}}
      & \res{0{\tiny{$\pm$0}}}{0{\tiny{$\pm$0}}}
      & \res{0{\tiny{$\pm$0}}}{1{\tiny{$\pm$1}}}
      & \res{6{\tiny{$\pm$6}}}{26{\tiny{$\pm$9}}}
      & \res{60{\tiny{$\pm$4}}}{\textbf{88}{\tiny{$\pm$4}}}
      & \res{56{\tiny{$\pm$5}}}{\textbf{91}{\tiny{$\pm$2}}} \\
    OGBench antsoccer-arena-navigate-task4
      & \res{1{\tiny{$\pm$1}}}{0{\tiny{$\pm$0}}}
      & \res{0{\tiny{$\pm$0}}}{0{\tiny{$\pm$0}}}
      & \res{0{\tiny{$\pm$0}}}{0{\tiny{$\pm$0}}}
      & \res{20{\tiny{$\pm$9}}}{17{\tiny{$\pm$6}}}
      & \res{32{\tiny{$\pm$4}}}{\textbf{70}{\tiny{$\pm$5}}}
      & \res{34{\tiny{$\pm$6}}}{\textbf{70}{\tiny{$\pm$6}}} \\
    OGBench antsoccer-arena-navigate-task5
      & \res{0{\tiny{$\pm$0}}}{0{\tiny{$\pm$0}}}
      & \res{0{\tiny{$\pm$0}}}{0{\tiny{$\pm$0}}}
      & \res{0{\tiny{$\pm$0}}}{0{\tiny{$\pm$0}}}
      & \res{1{\tiny{$\pm$2}}}{4{\tiny{$\pm$3}}}
      & \res{43{\tiny{$\pm$9}}}{22{\tiny{$\pm$18}}}
      & \res{32{\tiny{$\pm$7}}}{\textbf{33}{\tiny{$\pm$25}}} \\
    \midrule
    OGBench cube-double-play-task1
      & \res{27{\tiny{$\pm$8}}}{\textbf{100}{\tiny{$\pm$0}}}
      & \res{10{\tiny{$\pm$20}}}{0{\tiny{$\pm$0}}}
      & \res{0{\tiny{$\pm$0}}}{11{\tiny{$\pm$15}}}
      & \res{31{\tiny{$\pm$5}}}{89{\tiny{$\pm$3}}}
      & \res{59{\tiny{$\pm$9}}}{\textbf{97}{\tiny{$\pm$2}}}
      & \res{62{\tiny{$\pm$5}}}{\textbf{98}{\tiny{$\pm$1}}} \\
    OGBench cube-double-play-task2
      & \res{8{\tiny{$\pm$3}}}{20{\tiny{$\pm$6}}}
      & \res{0{\tiny{$\pm$0}}}{0{\tiny{$\pm$0}}}
      & \res{0{\tiny{$\pm$0}}}{0{\tiny{$\pm$0}}}
      & \res{13{\tiny{$\pm$3}}}{29{\tiny{$\pm$4}}}
      & \res{42{\tiny{$\pm$9}}}{\textbf{86}{\tiny{$\pm$7}}}
      & \res{40{\tiny{$\pm$7}}}{\textbf{90}{\tiny{$\pm$3}}} \\
    OGBench cube-double-play-task3
      & \res{3{\tiny{$\pm$2}}}{16{\tiny{$\pm$7}}}
      & \res{0{\tiny{$\pm$0}}}{0{\tiny{$\pm$0}}}
      & \res{0{\tiny{$\pm$0}}}{0{\tiny{$\pm$0}}}
      & \res{6{\tiny{$\pm$2}}}{31{\tiny{$\pm$7}}}
      & \res{29{\tiny{$\pm$5}}}{\textbf{89}{\tiny{$\pm$9}}}
      & \res{35{\tiny{$\pm$8}}}{\textbf{90}{\tiny{$\pm$4}}} \\
    OGBench cube-double-play-task4
      & \res{1{\tiny{$\pm$1}}}{0{\tiny{$\pm$1}}}
      & \res{0{\tiny{$\pm$0}}}{0{\tiny{$\pm$0}}}
      & \res{0{\tiny{$\pm$0}}}{0{\tiny{$\pm$0}}}
      & \res{1{\tiny{$\pm$1}}}{0{\tiny{$\pm$0}}}
      & \res{5{\tiny{$\pm$3}}}{4{\tiny{$\pm$2}}}
      & \res{11{\tiny{$\pm$4}}}{\textbf{21}{\tiny{$\pm$8}}} \\
    OGBench cube-double-play-task5
      & \res{3{\tiny{$\pm$2}}}{3{\tiny{$\pm$2}}}
      & \res{0{\tiny{$\pm$0}}}{0{\tiny{$\pm$0}}}
      & \res{0{\tiny{$\pm$0}}}{0{\tiny{$\pm$0}}}
      & \res{17{\tiny{$\pm$3}}}{51{\tiny{$\pm$6}}}
      & \res{20{\tiny{$\pm$6}}}{88{\tiny{$\pm$3}}}
      & \res{23{\tiny{$\pm$10}}}{\textbf{96}{\tiny{$\pm$3}}} \\
    \midrule
    OGBench puzzle-4x4-play-task1
      & \res{25{\tiny{$\pm$4}}}{\textbf{100}{\tiny{$\pm$0}}}
      & \res{10{\tiny{$\pm$20}}}{70{\tiny{$\pm$30}}}
      & \res{0{\tiny{$\pm$0}}}{\textbf{100}{\tiny{$\pm$0}}}
      & \res{38{\tiny{$\pm$6}}}{\textbf{100}{\tiny{$\pm$1}}}
      & \res{32{\tiny{$\pm$6}}}{\textbf{100}{\tiny{$\pm$0}}}
      & \res{33{\tiny{$\pm$8}}}{\textbf{100}{\tiny{$\pm$0}}} \\
    OGBench puzzle-4x4-play-task2
      & \res{10{\tiny{$\pm$4}}}{0{\tiny{$\pm$0}}}
      & \res{0{\tiny{$\pm$0}}}{0{\tiny{$\pm$0}}}
      & \res{0{\tiny{$\pm$0}}}{\textbf{40}{\tiny{$\pm$32}}}
      & \res{16{\tiny{$\pm$5}}}{1{\tiny{$\pm$1}}}
      & \res{13{\tiny{$\pm$4}}}{0{\tiny{$\pm$0}}}
      & \res{11{\tiny{$\pm$4}}}{0{\tiny{$\pm$0}}} \\
    OGBench puzzle-4x4-play-task3
      & \res{16{\tiny{$\pm$4}}}{42{\tiny{$\pm$26}}}
      & \res{0{\tiny{$\pm$0}}}{30{\tiny{$\pm$30}}}
      & \res{0{\tiny{$\pm$0}}}{89{\tiny{$\pm$19}}}
      & \res{49{\tiny{$\pm$8}}}{91{\tiny{$\pm$12}}}
      & \res{18{\tiny{$\pm$5}}}{70{\tiny{$\pm$28}}}
      & \res{22{\tiny{$\pm$9}}}{\textbf{100}{\tiny{$\pm$0}}} \\
    OGBench puzzle-4x4-play-task4
      & \res{8{\tiny{$\pm$2}}}{3{\tiny{$\pm$3}}}
      & \res{0{\tiny{$\pm$0}}}{0{\tiny{$\pm$0}}}
      & \res{0{\tiny{$\pm$0}}}{40{\tiny{$\pm$32}}}
      & \res{20{\tiny{$\pm$4}}}{19{\tiny{$\pm$17}}}
      & \res{7{\tiny{$\pm$4}}}{53{\tiny{$\pm$30}}}
      & \res{20{\tiny{$\pm$5}}}{\textbf{80}{\tiny{$\pm$24}}} \\
    OGBench puzzle-4x4-play-task5
      & \res{9{\tiny{$\pm$2}}}{0{\tiny{$\pm$0}}}
      & \res{0{\tiny{$\pm$0}}}{0{\tiny{$\pm$0}}}
      & \res{0{\tiny{$\pm$0}}}{\textbf{20}{\tiny{$\pm$26}}}
      & \res{8{\tiny{$\pm$3}}}{0{\tiny{$\pm$0}}}
      & \res{5{\tiny{$\pm$2}}}{0{\tiny{$\pm$0}}}
      & \res{8{\tiny{$\pm$2}}}{0{\tiny{$\pm$0}}} \\
    \midrule
    D4RL antmaze-umaze-v2
      & \res{90{\tiny{$\pm$4}}}{\textbf{100}{\tiny{$\pm$1}}}
      & \res{81{\tiny{$\pm$3}}}{\textbf{99}{\tiny{$\pm$1}}}
      & \res{0{\tiny{$\pm$0}}}{\textbf{100}{\tiny{$\pm$1}}}
      & \res{94{\tiny{$\pm$2}}}{\textbf{96}{\tiny{$\pm$2}}}
      & \res{98{\tiny{$\pm$1}}}{\textbf{99}{\tiny{$\pm$1}}}
      & \res{98{\tiny{$\pm$2}}}{\textbf{100}{\tiny{$\pm$1}}} \\
    D4RL antmaze-umaze-diverse-v2
      & \res{75{\tiny{$\pm$12}}}{\textbf{100}{\tiny{$\pm$1}}}
      & \res{36{\tiny{$\pm$12}}}{94{\tiny{$\pm$4}}}
      & \res{0{\tiny{$\pm$0}}}{\textbf{99}{\tiny{$\pm$1}}}
      & \res{71{\tiny{$\pm$14}}}{53{\tiny{$\pm$22}}}
      & \res{85{\tiny{$\pm$7}}}{\textbf{99}{\tiny{$\pm$1}}}
      & \res{78{\tiny{$\pm$5}}}{\textbf{99}{\tiny{$\pm$1}}} \\
    D4RL antmaze-medium-play-v2
      & \res{8{\tiny{$\pm$8}}}{91{\tiny{$\pm$11}}}
      & \res{60{\tiny{$\pm$12}}}{91{\tiny{$\pm$11}}}
      & \res{0{\tiny{$\pm$0}}}{\textbf{97}{\tiny{$\pm$1}}}
      & \res{54{\tiny{$\pm$14}}}{79{\tiny{$\pm$18}}}
      & \res{78{\tiny{$\pm$5}}}{\textbf{94}{\tiny{$\pm$2}}}
      & \res{79{\tiny{$\pm$5}}}{\textbf{97}{\tiny{$\pm$1}}} \\
    D4RL antmaze-medium-diverse-v2
      & \res{11{\tiny{$\pm$8}}}{\textbf{98}{\tiny{$\pm$2}}}
      & \res{61{\tiny{$\pm$5}}}{\textbf{95}{\tiny{$\pm$2}}}
      & \res{0{\tiny{$\pm$0}}}{\textbf{98}{\tiny{$\pm$1}}}
      & \res{41{\tiny{$\pm$20}}}{86{\tiny{$\pm$3}}}
      & \res{66{\tiny{$\pm$9}}}{\textbf{95}{\tiny{$\pm$2}}}
      & \res{60{\tiny{$\pm$11}}}{\textbf{97}{\tiny{$\pm$1}}} \\
    D4RL antmaze-large-play-v2
      & \res{42{\tiny{$\pm$21}}}{51{\tiny{$\pm$28}}}
      & \res{35{\tiny{$\pm$5}}}{74{\tiny{$\pm$6}}}
      & \res{0{\tiny{$\pm$0}}}{79{\tiny{$\pm$11}}}
      & \res{64{\tiny{$\pm$5}}}{78{\tiny{$\pm$3}}}
      & \res{73{\tiny{$\pm$18}}}{\textbf{95}{\tiny{$\pm$1}}}
      & \res{75{\tiny{$\pm$17}}}{\textbf{95}{\tiny{$\pm$1}}} \\
    D4RL antmaze-large-diverse-v2
      & \res{77{\tiny{$\pm$5}}}{\textbf{93}{\tiny{$\pm$2}}}
      & \res{29{\tiny{$\pm$8}}}{80{\tiny{$\pm$4}}}
      & \res{0{\tiny{$\pm$0}}}{83{\tiny{$\pm$8}}}
      & \res{73{\tiny{$\pm$6}}}{84{\tiny{$\pm$3}}}
      & \res{81{\tiny{$\pm$11}}}{91{\tiny{$\pm$2}}}
      & \res{82{\tiny{$\pm$4}}}{\textbf{97}{\tiny{$\pm$1}}} \\
    \midrule
    D4RL pen-cloned-v1
      & \res{77{\tiny{$\pm$6}}}{129{\tiny{$\pm$4}}}
      & \res{-1{\tiny{$\pm$1}}}{-1{\tiny{$\pm$1}}}
      & \res{3{\tiny{$\pm$2}}}{93{\tiny{$\pm$8}}}
      & \res{73{\tiny{$\pm$3}}}{97{\tiny{$\pm$7}}}
      & \res{53{\tiny{$\pm$12}}}{\textbf{141}{\tiny{$\pm$4}}}
      & \res{51{\tiny{$\pm$8}}}{\textbf{140}{\tiny{$\pm$3}}} \\
    D4RL door-cloned-v1
      & \res{0{\tiny{$\pm$0}}}{83{\tiny{$\pm$6}}}
      & \res{0{\tiny{$\pm$0}}}{0{\tiny{$\pm$0}}}
      & \res{0{\tiny{$\pm$0}}}{\textbf{101}{\tiny{$\pm$4}}}
      & \res{1{\tiny{$\pm$1}}}{23{\tiny{$\pm$5}}}
      & \res{0{\tiny{$\pm$0}}}{\textbf{101}{\tiny{$\pm$2}}}
      & \res{0{\tiny{$\pm$0}}}{\textbf{104}{\tiny{$\pm$1}}} \\
    D4RL hammer-cloned-v1
      & \res{5{\tiny{$\pm$3}}}{119{\tiny{$\pm$2}}}
      & \res{0{\tiny{$\pm$0}}}{0{\tiny{$\pm$0}}}
      & \res{0{\tiny{$\pm$0}}}{99{\tiny{$\pm$19}}}
      & \res{1{\tiny{$\pm$1}}}{44{\tiny{$\pm$8}}}
      & \res{0{\tiny{$\pm$0}}}{110{\tiny{$\pm$25}}}
      & \res{0{\tiny{$\pm$0}}}{\textbf{134}{\tiny{$\pm$2}}} \\
    D4RL relocate-cloned-v1
      & \res{0{\tiny{$\pm$0}}}{0{\tiny{$\pm$0}}}
      & \res{0{\tiny{$\pm$0}}}{0{\tiny{$\pm$0}}}
      & \res{0{\tiny{$\pm$0}}}{0{\tiny{$\pm$0}}}
      & \res{0{\tiny{$\pm$0}}}{2{\tiny{$\pm$1}}}
      & \res{1{\tiny{$\pm$0}}}{47{\tiny{$\pm$2}}}
      & \res{1{\tiny{$\pm$0}}}{\textbf{72}{\tiny{$\pm$2}}} \\
    \bottomrule
    \end{tabular}%
    }
    \label{tab:full_result_main}
\end{table}

\newpage

\begin{table}[h!]
    \centering
    \footnotesize
    \vskip 0.1in
    \renewcommand{\arraystretch}{1.1}
    \caption{Full results for ablation studies (Fig.~\ref{fig:ablation_facer}, Fig.~\ref{fig:noise&compu}). Scores show offline pre-training $\rightarrow$ online fine-tuning, averaged over 10 seeds (mean $\pm$ 95\% CI). For OGBench, the \texttt{singletask} suffix is omitted.}
    \resizebox{0.9\textwidth}{!}{%
    \begin{tabular}{l|*{5}{r@{\,$\rightarrow$\,}l}}
    \toprule
    Environment
      & \multicolumn{2}{c}{FINO}
      & \multicolumn{2}{c}{Direct Noise}
      & \multicolumn{2}{c}{w/o Noise}
      & \multicolumn{2}{c}{w/o Guidance}\\
    \midrule
    OGBench humanoidmaze-medium-navigate-task1 & \res{13{\tiny{$\pm$3}}}{\textbf{91}{\tiny{$\pm$3}}} & \res{14{\tiny{$\pm$5}}}{0{\tiny{$\pm$1}}} & \res{18{\tiny{$\pm$5}}}{0{\tiny{$\pm$0}}} & \res{16{\tiny{$\pm$3}}}{50{\tiny{$\pm$16}}} \\
    OGBench humanoidmaze-medium-navigate-task2 & \res{77{\tiny{$\pm$25}}}{\textbf{99}{\tiny{$\pm$1}}} & \res{88{\tiny{$\pm$16}}}{\textbf{96}{\tiny{$\pm$6}}} & \res{80{\tiny{$\pm$19}}}{\textbf{99}{\tiny{$\pm$2}}} & \res{71{\tiny{$\pm$25}}}{84{\tiny{$\pm$20}}} \\
    OGBench humanoidmaze-medium-navigate-task3 & \res{52{\tiny{$\pm$24}}}{\textbf{99}{\tiny{$\pm$1}}} & \res{54{\tiny{$\pm$22}}}{60{\tiny{$\pm$32}}} & \res{45{\tiny{$\pm$24}}}{49{\tiny{$\pm$32}}} & \res{48{\tiny{$\pm$23}}}{79{\tiny{$\pm$25}}} \\
    OGBench humanoidmaze-medium-navigate-task4 & \res{11{\tiny{$\pm$4}}}{\textbf{94}{\tiny{$\pm$3}}} & \res{7{\tiny{$\pm$10}}}{31{\tiny{$\pm$24}}} & \res{2{\tiny{$\pm$4}}}{20{\tiny{$\pm$26}}} & \res{0{\tiny{$\pm$0}}}{74{\tiny{$\pm$16}}} \\
    OGBench humanoidmaze-medium-navigate-task5 & \res{99{\tiny{$\pm$1}}}{\textbf{100}{\tiny{$\pm$1}}} & \res{99{\tiny{$\pm$1}}}{\textbf{100}{\tiny{$\pm$1}}} & \res{99{\tiny{$\pm$1}}}{\textbf{99}{\tiny{$\pm$1}}} & \res{98{\tiny{$\pm$1}}}{\textbf{99}{\tiny{$\pm$1}}} \\ \midrule
    OGBench humanoidmaze-large-navigate-task1 & \res{5{\tiny{$\pm$4}}}{5{\tiny{$\pm$9}}} & \res{4{\tiny{$\pm$4}}}{0{\tiny{$\pm$0}}} & \res{3{\tiny{$\pm$4}}}{8{\tiny{$\pm$14}}} & \res{4{\tiny{$\pm$4}}}{0{\tiny{$\pm$1}}} \\
    OGBench humanoidmaze-large-navigate-task2 & \res{0{\tiny{$\pm$0}}}{\textbf{6}{\tiny{$\pm$6}}} & \res{0{\tiny{$\pm$0}}}{0{\tiny{$\pm$0}}} & \res{0{\tiny{$\pm$0}}}{2{\tiny{$\pm$3}}} & \res{0{\tiny{$\pm$1}}}{3{\tiny{$\pm$5}}} \\
    OGBench humanoidmaze-large-navigate-task3 & \res{22{\tiny{$\pm$10}}}{\textbf{99}{\tiny{$\pm$1}}} & \res{19{\tiny{$\pm$6}}}{41{\tiny{$\pm$5}}} & \res{18{\tiny{$\pm$7}}}{\textbf{95}{\tiny{$\pm$4}}} & \res{25{\tiny{$\pm$10}}}{78{\tiny{$\pm$26}}} \\
    OGBench humanoidmaze-large-navigate-task4 & \res{0{\tiny{$\pm$0}}}{\textbf{48}{\tiny{$\pm$29}}} & \res{2{\tiny{$\pm$2}}}{8{\tiny{$\pm$7}}} & \res{0{\tiny{$\pm$0}}}{5{\tiny{$\pm$9}}} & \res{0{\tiny{$\pm$0}}}{18{\tiny{$\pm$22}}} \\
    OGBench humanoidmaze-large-navigate-task5 & \res{0{\tiny{$\pm$0}}}{8{\tiny{$\pm$16}}} & \res{0{\tiny{$\pm$0}}}{4{\tiny{$\pm$4}}} & \res{1{\tiny{$\pm$2}}}{0{\tiny{$\pm$0}}} & \res{0{\tiny{$\pm$0}}}{\textbf{13}{\tiny{$\pm$18}}} \\ \midrule
    OGBench antmaze-large-navigate-task1 & \res{82{\tiny{$\pm$6}}}{\textbf{98}{\tiny{$\pm$2}}} & \res{82{\tiny{$\pm$5}}}{\textbf{98}{\tiny{$\pm$1}}} & \res{68{\tiny{$\pm$15}}}{\textbf{98}{\tiny{$\pm$1}}} & \res{78{\tiny{$\pm$5}}}{35{\tiny{$\pm$27}}} \\
    OGBench antmaze-large-navigate-task2 & \res{62{\tiny{$\pm$6}}}{\textbf{97}{\tiny{$\pm$1}}} & \res{59{\tiny{$\pm$5}}}{69{\tiny{$\pm$8}}} & \res{62{\tiny{$\pm$5}}}{\textbf{94}{\tiny{$\pm$3}}} & \res{75{\tiny{$\pm$6}}}{91{\tiny{$\pm$3}}} \\
    OGBench antmaze-large-navigate-task3 & \res{92{\tiny{$\pm$3}}}{\textbf{100}{\tiny{$\pm$0}}} & \res{96{\tiny{$\pm$2}}}{\textbf{99}{\tiny{$\pm$1}}} & \res{95{\tiny{$\pm$2}}}{\textbf{100}{\tiny{$\pm$0}}} & \res{95{\tiny{$\pm$3}}}{\textbf{99}{\tiny{$\pm$1}}} \\
    OGBench antmaze-large-navigate-task4 & \res{83{\tiny{$\pm$4}}}{\textbf{99}{\tiny{$\pm$1}}} & \res{72{\tiny{$\pm$16}}}{\textbf{96}{\tiny{$\pm$2}}} & \res{69{\tiny{$\pm$15}}}{\textbf{98}{\tiny{$\pm$1}}} & \res{81{\tiny{$\pm$4}}}{\textbf{98}{\tiny{$\pm$1}}} \\
    OGBench antmaze-large-navigate-task5 & \res{85{\tiny{$\pm$5}}}{\textbf{99}{\tiny{$\pm$1}}} & \res{83{\tiny{$\pm$3}}}{\textbf{95}{\tiny{$\pm$2}}} & \res{83{\tiny{$\pm$3}}}{\textbf{98}{\tiny{$\pm$2}}} & \res{82{\tiny{$\pm$5}}}{83{\tiny{$\pm$21}}} \\ \midrule
    OGBench antmaze-giant-navigate-task1 & \res{3{\tiny{$\pm$4}}}{\textbf{96}{\tiny{$\pm$1}}} & \res{9{\tiny{$\pm$7}}}{64{\tiny{$\pm$22}}} & \res{9{\tiny{$\pm$10}}}{85{\tiny{$\pm$13}}} & \res{4{\tiny{$\pm$5}}}{79{\tiny{$\pm$26}}} \\
    OGBench antmaze-giant-navigate-task2 & \res{0{\tiny{$\pm$1}}}{\textbf{99}{\tiny{$\pm$1}}} & \res{18{\tiny{$\pm$11}}}{\textbf{96}{\tiny{$\pm$1}}} & \res{1{\tiny{$\pm$2}}}{\textbf{98}{\tiny{$\pm$2}}} & \res{1{\tiny{$\pm$2}}}{\textbf{99}{\tiny{$\pm$1}}} \\
    OGBench antmaze-giant-navigate-task3 & \res{0{\tiny{$\pm$0}}}{0{\tiny{$\pm$0}}} & \res{0{\tiny{$\pm$1}}}{1{\tiny{$\pm$2}}} & \res{0{\tiny{$\pm$0}}}{0{\tiny{$\pm$0}}} & \res{0{\tiny{$\pm$0}}}{\textbf{7}{\tiny{$\pm$15}}} \\
    OGBench antmaze-giant-navigate-task4 & \res{29{\tiny{$\pm$23}}}{\textbf{99}{\tiny{$\pm$1}}} & \res{10{\tiny{$\pm$12}}}{85{\tiny{$\pm$15}}} & \res{5{\tiny{$\pm$7}}}{\textbf{98}{\tiny{$\pm$1}}} & \res{28{\tiny{$\pm$23}}}{\textbf{97}{\tiny{$\pm$3}}} \\
    OGBench antmaze-giant-navigate-task5 & \res{36{\tiny{$\pm$15}}}{\textbf{99}{\tiny{$\pm$1}}} & \res{43{\tiny{$\pm$20}}}{\textbf{98}{\tiny{$\pm$1}}} & \res{22{\tiny{$\pm$13}}}{\textbf{99}{\tiny{$\pm$1}}} & \res{31{\tiny{$\pm$17}}}{\textbf{99}{\tiny{$\pm$1}}} \\ \midrule
    OGBench antsoccer-arena-navigate-task1 & \res{77{\tiny{$\pm$6}}}{\textbf{93}{\tiny{$\pm$2}}} & \res{81{\tiny{$\pm$4}}}{\textbf{93}{\tiny{$\pm$4}}} & \res{69{\tiny{$\pm$7}}}{\textbf{94}{\tiny{$\pm$2}}} & \res{68{\tiny{$\pm$5}}}{73{\tiny{$\pm$16}}} \\
    OGBench antsoccer-arena-navigate-task2 & \res{84{\tiny{$\pm$5}}}{\textbf{98}{\tiny{$\pm$1}}} & \res{90{\tiny{$\pm$3}}}{\textbf{97}{\tiny{$\pm$2}}} & \res{83{\tiny{$\pm$6}}}{\textbf{97}{\tiny{$\pm$2}}} & \res{83{\tiny{$\pm$4}}}{93{\tiny{$\pm$2}}} \\
    OGBench antsoccer-arena-navigate-task3 & \res{56{\tiny{$\pm$5}}}{\textbf{91}{\tiny{$\pm$2}}} & \res{58{\tiny{$\pm$4}}}{\textbf{87}{\tiny{$\pm$4}}} & \res{54{\tiny{$\pm$4}}}{81{\tiny{$\pm$4}}} & \res{57{\tiny{$\pm$4}}}{78{\tiny{$\pm$4}}} \\
    OGBench antsoccer-arena-navigate-task4 & \res{34{\tiny{$\pm$6}}}{\textbf{70}{\tiny{$\pm$6}}} & \res{33{\tiny{$\pm$4}}}{\textbf{71}{\tiny{$\pm$8}}} & \res{43{\tiny{$\pm$5}}}{62{\tiny{$\pm$7}}} & \res{43{\tiny{$\pm$4}}}{50{\tiny{$\pm$11}}} \\
    OGBench antsoccer-arena-navigate-task5 & \res{32{\tiny{$\pm$7}}}{33{\tiny{$\pm$25}}} & \res{43{\tiny{$\pm$7}}}{14{\tiny{$\pm$17}}} & \res{19{\tiny{$\pm$5}}}{10{\tiny{$\pm$19}}} & \res{33{\tiny{$\pm$8}}}{\textbf{42}{\tiny{$\pm$23}}}\\ \midrule
    OGBench cube-double-play-task1 & \res{62{\tiny{$\pm$5}}}{\textbf{98}{\tiny{$\pm$1}}} & \res{64{\tiny{$\pm$9}}}{\textbf{97}{\tiny{$\pm$3}}} & \res{73{\tiny{$\pm$6}}}{\textbf{95}{\tiny{$\pm$3}}} & \res{74{\tiny{$\pm$8}}}{\textbf{96}{\tiny{$\pm$3}}} \\
    OGBench cube-double-play-task2 & \res{40{\tiny{$\pm$7}}}{\textbf{90}{\tiny{$\pm$3}}} & \res{40{\tiny{$\pm$5}}}{86{\tiny{$\pm$3}}} & \res{60{\tiny{$\pm$7}}}{80{\tiny{$\pm$9}}} & \res{61{\tiny{$\pm$9}}}{\textbf{89}{\tiny{$\pm$5}}} \\
    OGBench cube-double-play-task3 & \res{35{\tiny{$\pm$8}}}{90{\tiny{$\pm$4}}} & \res{26{\tiny{$\pm$5}}}{88{\tiny{$\pm$6}}} & \res{57{\tiny{$\pm$5}}}{88{\tiny{$\pm$5}}} & \res{52{\tiny{$\pm$7}}}{\textbf{96}{\tiny{$\pm$2}}} \\
    OGBench cube-double-play-task4 & \res{11{\tiny{$\pm$4}}}{21{\tiny{$\pm$8}}} & \res{5{\tiny{$\pm$2}}}{3{\tiny{$\pm$2}}} & \res{14{\tiny{$\pm$1}}}{2{\tiny{$\pm$1}}} & \res{8{\tiny{$\pm$3}}}{6{\tiny{$\pm$4}}} \\
    OGBench cube-double-play-task5 & \res{23{\tiny{$\pm$10}}}{\textbf{96}{\tiny{$\pm$3}}} & \res{21{\tiny{$\pm$6}}}{88{\tiny{$\pm$5}}} & \res{43{\tiny{$\pm$15}}}{86{\tiny{$\pm$6}}} & \res{26{\tiny{$\pm$9}}}{91{\tiny{$\pm$4}}} \\ \midrule
    OGBench puzzle-4x4-play-task1 & \res{33{\tiny{$\pm$8}}}{\textbf{100}{\tiny{$\pm$0}}} & \res{31{\tiny{$\pm$5}}}{\textbf{100}{\tiny{$\pm$0}}} & \res{56{\tiny{$\pm$7}}}{\textbf{100}{\tiny{$\pm$0}}} & \res{61{\tiny{$\pm$7}}}{\textbf{100}{\tiny{$\pm$0}}} \\
    OGBench puzzle-4x4-play-task2 & \res{11{\tiny{$\pm$4}}}{0{\tiny{$\pm$0}}} & \res{12{\tiny{$\pm$2}}}{0{\tiny{$\pm$0}}} & \res{15{\tiny{$\pm$5}}}{0{\tiny{$\pm$0}}} & \res{10{\tiny{$\pm$3}}}{0{\tiny{$\pm$0}}}\\
    OGBench puzzle-4x4-play-task3 & \res{22{\tiny{$\pm$9}}}{\textbf{100}{\tiny{$\pm$0}}} & \res{20{\tiny{$\pm$2}}}{80{\tiny{$\pm$26}}} & \res{53{\tiny{$\pm$11}}}{88{\tiny{$\pm$15}}} & \res{59{\tiny{$\pm$6}}}{\textbf{97}{\tiny{$\pm$6}}} \\
    OGBench puzzle-4x4-play-task4 & \res{20{\tiny{$\pm$5}}}{\textbf{80}{\tiny{$\pm$24}}} & \res{9{\tiny{$\pm$3}}}{61{\tiny{$\pm$31}}} & \res{18{\tiny{$\pm$4}}}{33{\tiny{$\pm$29}}} & \res{19{\tiny{$\pm$6}}}{10{\tiny{$\pm$20}}} \\
    OGBench puzzle-4x4-play-task5 & \res{8{\tiny{$\pm$2}}}{0{\tiny{$\pm$0}}} & \res{7{\tiny{$\pm$3}}}{0{\tiny{$\pm$0}}} & \res{6{\tiny{$\pm$3}}}{0{\tiny{$\pm$0}}} & \res{9{\tiny{$\pm$4}}}{0{\tiny{$\pm$0}}} \\ \midrule
    D4RL antmaze-umaze-v2 & \res{98{\tiny{$\pm$2}}}{\textbf{100}{\tiny{$\pm$1}}} & \res{96{\tiny{$\pm$2}}}{\textbf{99}{\tiny{$\pm$1}}} & \res{98{\tiny{$\pm$1}}}{\textbf{99}{\tiny{$\pm$1}}} & \res{97{\tiny{$\pm$2}}}{\textbf{99}{\tiny{$\pm$1}}} \\
    D4RL antmaze-umaze-diverse-v2 & \res{78{\tiny{$\pm$5}}}{\textbf{99}{\tiny{$\pm$1}}} & \res{85{\tiny{$\pm$5}}}{\textbf{97}{\tiny{$\pm$1}}} & \res{85{\tiny{$\pm$7}}}{\textbf{99}{\tiny{$\pm$1}}} & \res{82{\tiny{$\pm$6}}}{\textbf{100}{\tiny{$\pm$0}}} \\
    D4RL antmaze-medium-play-v2 & \res{79{\tiny{$\pm$5}}}{\textbf{97}{\tiny{$\pm$1}}} & \res{74{\tiny{$\pm$4}}}{93{\tiny{$\pm$3}}} & \res{80{\tiny{$\pm$5}}}{\textbf{96}{\tiny{$\pm$2}}} & \res{79{\tiny{$\pm$4}}}{\textbf{96}{\tiny{$\pm$2}}} \\
    D4RL antmaze-medium-diverse-v2 & \res{60{\tiny{$\pm$11}}}{\textbf{97}{\tiny{$\pm$1}}} & \res{62{\tiny{$\pm$8}}}{\textbf{95}{\tiny{$\pm$1}}} & \res{62{\tiny{$\pm$10}}}{\textbf{95}{\tiny{$\pm$6}}} & \res{61{\tiny{$\pm$11}}}{\textbf{97}{\tiny{$\pm$1}}} \\
    D4RL antmaze-large-play-v2 & \res{75{\tiny{$\pm$17}}}{\textbf{95}{\tiny{$\pm$1}}} & \res{72{\tiny{$\pm$19}}}{90{\tiny{$\pm$5}}} & \res{67{\tiny{$\pm$22}}}{\textbf{92}{\tiny{$\pm$2}}} & \res{73{\tiny{$\pm$16}}}{\textbf{94}{\tiny{$\pm$3}}} \\
    D4RL antmaze-large-diverse-v2 & \res{82{\tiny{$\pm$4}}}{\textbf{97}{\tiny{$\pm$1}}} & \res{83{\tiny{$\pm$9}}}{90{\tiny{$\pm$4}}} & \res{72{\tiny{$\pm$16}}}{\textbf{94}{\tiny{$\pm$3}}} & \res{81{\tiny{$\pm$3}}}{\textbf{94}{\tiny{$\pm$3}}} \\ \midrule
    D4RL pen-cloned-v1 & \res{51{\tiny{$\pm$8}}}{\textbf{140}{\tiny{$\pm$3}}} & \res{57{\tiny{$\pm$9}}}{\textbf{137}{\tiny{$\pm$4}}} & \res{60{\tiny{$\pm$8}}}{\textbf{135}{\tiny{$\pm$4}}} & \res{57{\tiny{$\pm$7}}}{\textbf{136}{\tiny{$\pm$4}}} \\
    D4RL door-cloned-v1 & \res{0{\tiny{$\pm$0}}}{\textbf{104}{\tiny{$\pm$1}}} & \res{0{\tiny{$\pm$0}}}{\textbf{102}{\tiny{$\pm$2}}} & \res{0{\tiny{$\pm$0}}}{\textbf{102}{\tiny{$\pm$2}}} & \res{0{\tiny{$\pm$0}}}{\textbf{101}{\tiny{$\pm$4}}} \\
    D4RL hammer-cloned-v1 & \res{0{\tiny{$\pm$0}}}{\textbf{134}{\tiny{$\pm$2}}} & \res{0{\tiny{$\pm$0}}}{116{\tiny{$\pm$26}}} & \res{0{\tiny{$\pm$0}}}{120{\tiny{$\pm$12}}} & \res{0{\tiny{$\pm$0}}}{112{\tiny{$\pm$14}}}  \\
    D4RL relocate-cloned-v1 & \res{1{\tiny{$\pm$0}}}{\textbf{72}{\tiny{$\pm$2}}} & \res{1{\tiny{$\pm$0}}}{59{\tiny{$\pm$3}}} & \res{1{\tiny{$\pm$0}}}{61{\tiny{$\pm$5}}} & \res{1{\tiny{$\pm$1}}}{62{\tiny{$\pm$7}}} \\
    \bottomrule
    \end{tabular}%
    }
    \label{tab:full_result_ablation}
\end{table}

\newpage
\begin{figure}[t]
    \centering
    \includegraphics[width=\linewidth]{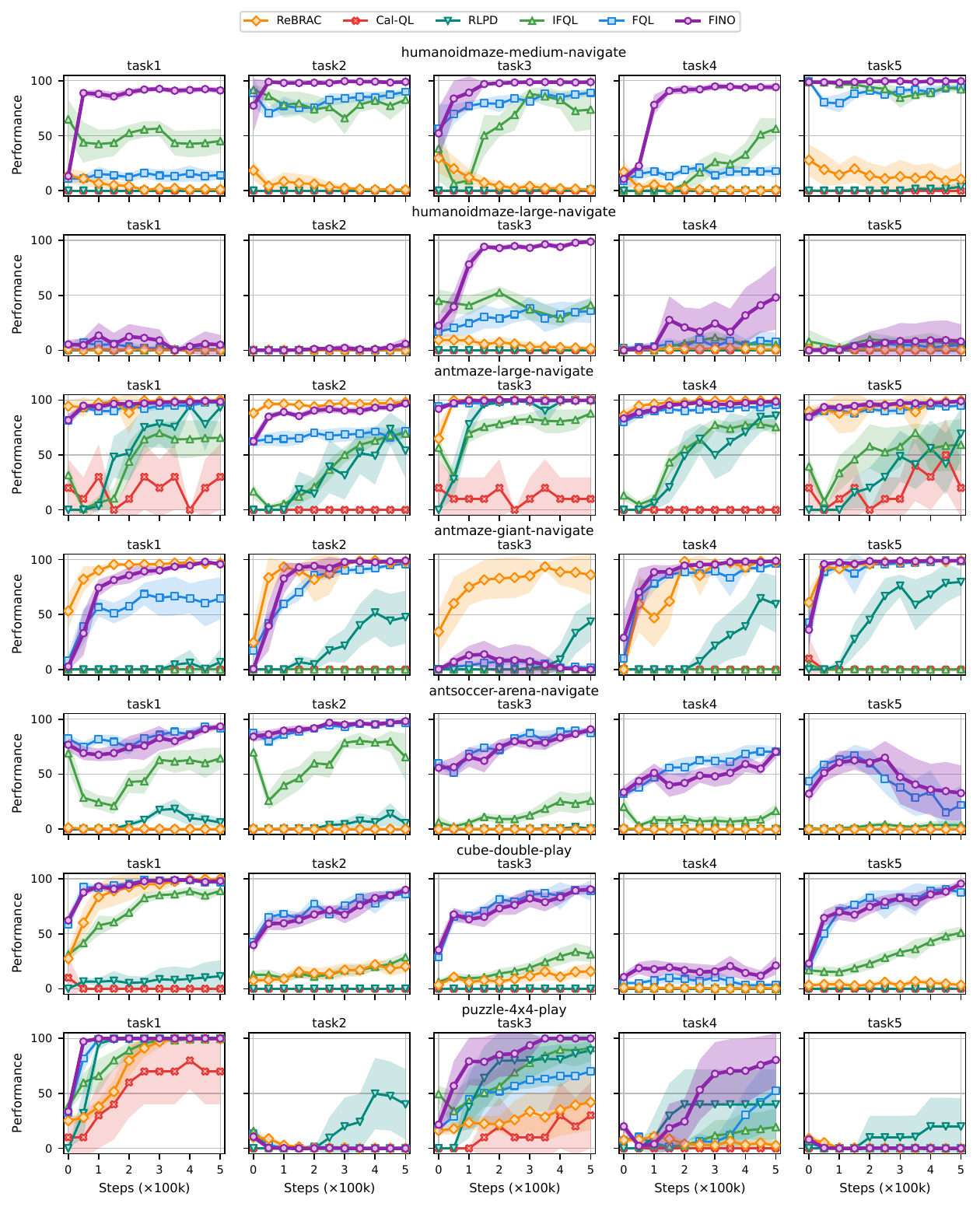}
    \caption{Full results on OGBench environments. 
    Each row corresponds to one environment, with five single-task variants shown side by side. 
    Shaded areas denote 95\% confidence intervals over 10 seeds.}
    \label{fig:ogbench_full}
\end{figure}

\newpage
\begin{figure}[t]
    \centering
    \includegraphics[width=\linewidth]{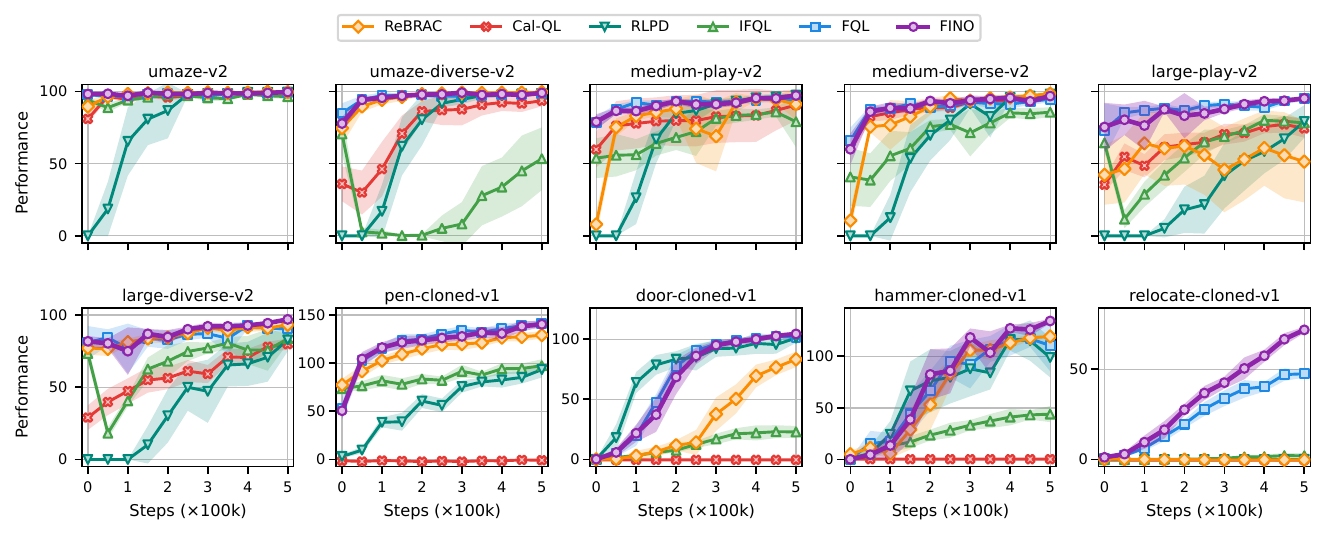}
    \caption{Full results on D4RL environments. 
    For AntMaze tasks, the prefix ``antmaze-'' is omitted for clarity. 
    Shaded areas denote 95\% confidence intervals over 10 seeds.}
    \label{fig:d4rl_full}
\end{figure}

\end{document}